\newcommand{\eat}[1]{}
\newtheorem{lemm}{Lemma}
\newtheorem{prop}{Proposition}
\newtheorem{defi}{Definition}[section]
\newtheorem{theo}{Theorem}[section]
\newtheorem{remark}{Remark}
\newlist{Properties}{enumerate}{2}
\setlist[Properties]{label=Property \arabic*., font=\textbf, itemindent=*}
\newcommand{\todo}[1]{\textcolor{red}{[TODO: #1]}}
\newcommand{\cbit}{\begin{compactitem}}
	\newcommand{\ceit}{\end{compactitem}}
\newcommand{\cben}{\begin{compactenum}}
	\newcommand{\ceen}{\end{compactenum}}
\newcommand{\mourmeth}{\text{CCVGAE}}
\newcommand{\ourmeth}{$\mourmeth$\xspace}
\def\BibTeX{{\rm B\kern-.05em{\sc i\kern-.025em b}\kern-.08em
		T\kern-.1667em\lower.7ex\hbox{E}\kern-.125emX}}
\icmltitlerunning{Concept-free Causal Disentanglement with Variational Graph Auto-Encoder}
\begin{document}
	
	\twocolumn[
	\icmltitle{ Concept-free Causal Disentanglement with Variational Graph Auto-Encoder}
	
	
	
	\icmlsetsymbol{equal}{*}
	
	\begin{icmlauthorlist}
		\icmlauthor{Jingyun Feng}{to}
		\icmlauthor{Lin Zhang}{goo}
		\icmlauthor{Lili Yang}{to}

		\icmlaffiliation{to}{Department of Statistics and Data Science, Southern University of Science and Technology, Shenzhen, Guangdong, China.}
		\icmlaffiliation{goo}{Gongsheng Matrix, Shenzhen, China}
	\end{icmlauthorlist}

	
	
	
	\vskip 0.3in
	]
	
	
	
	\printAffiliationsAndNotice{}  
	
	\begin{abstract}
		
		In disentangled representation learning, the goal is to achieve a compact representation that consists of all interpretable generative factors in the observational data.
		Learning disentangled representations for graphs becomes increasingly important as graph data rapidly grows.
		Existing approaches often rely on Variational Auto-Encoder (VAE) or its causal structure learning-based refinement, which suffer from sub-optimality in VAEs due to the independence factor assumption and unavailability of concept labels, respectively. 
		In this paper, we propose an unsupervised solution, dubbed concept-free causal disentanglement, built on a theoretically provable tight upper bound approximating the optimal factor.
		This results in an SCM-like causal structure modeling that directly learns concept structures from data.
		Based on this idea, we propose Concept-free Causal VGAE (CCVGAE) by incorporating a novel causal disentanglement layer into  Variational Graph Auto-Encoder.
		Furthermore, we prove concept consistency under our concept-free causal disentanglement framework, hence employing it to enhance the meta-learning framework, called  concept-free causal Meta-Graph (CC-Meta-Graph).
		We conduct extensive experiments to demonstrate the superiority of the proposed models: CCVGAE and CC-Meta-Graph,  reaching up to $29\%$ and $11\%$ absolute improvements over baselines in terms of AUC, respectively. 
		
	\end{abstract}

\section{Introduction}

Graph data becomes ubiquitous, in both natural and human-made scenarios, along with the rise of deep learning, gaining increasing attention and making learning on graphs an emerging research field, with the goal of understanding graphs and dealing with downstream applications, such as drug discovery~\citep{Jiaxuan2018}, traffic forecasting~\citep{Jiang2022}, recommender systems~\citep{Shiwen2020}, and others~\citep{zhou2020graph}. 
Of particular importance is graph representation learning~\citep{hamilton2020graph}, but it remains an outstanding research problem due to graphs' non-IID and non-Euclidean properties. 
There is growing attention to the disentanglement learning  to address this problem.

\begin{figure}[t]
    \centering
    \includegraphics[trim={0cm 0.1cm -3cm 1cm},clip,width=0.75\textwidth]{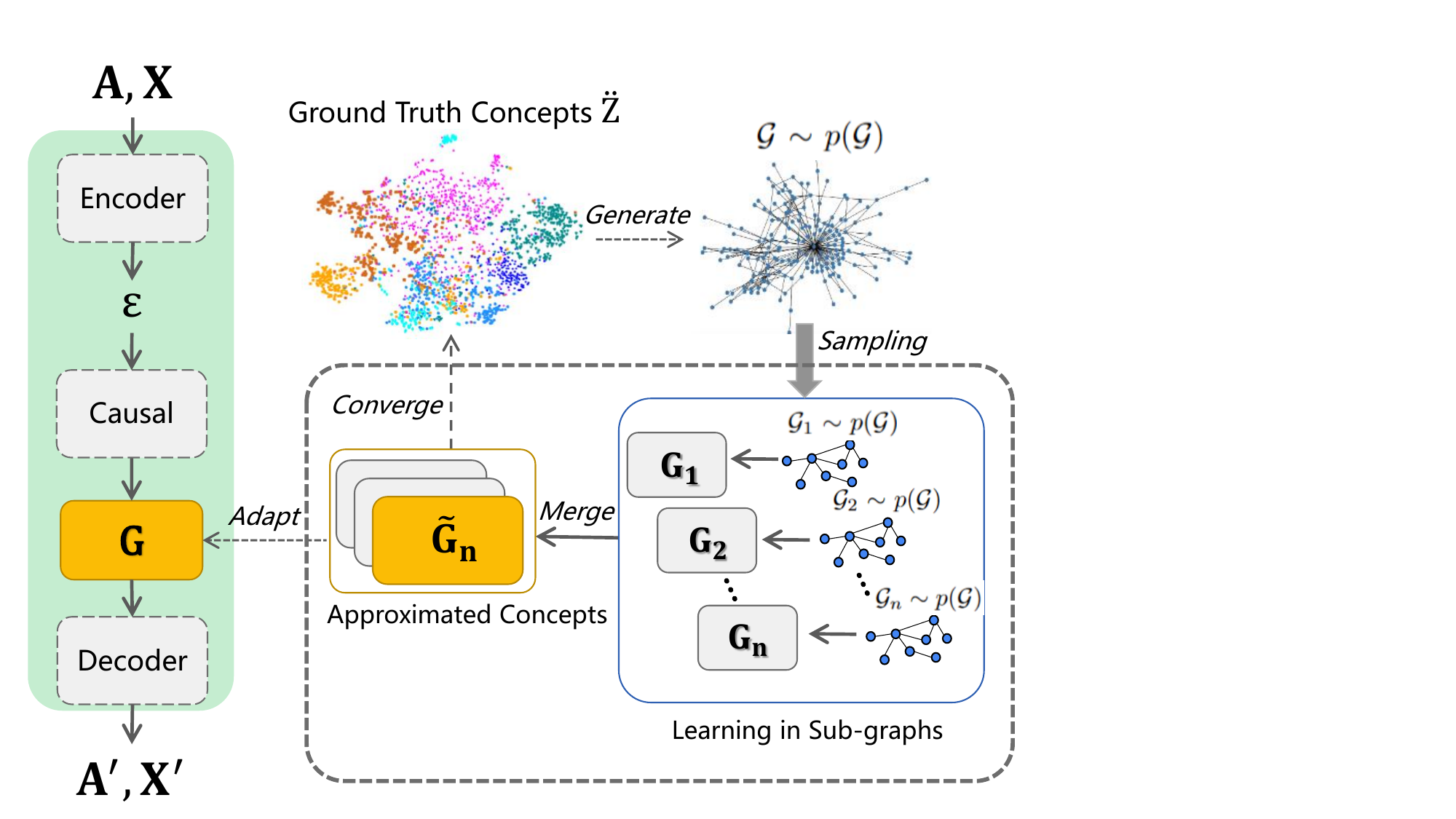}
    \caption{
     An illustration of our proposed ideas. 
    The left component (shown in light green) demonstrates a causal disentangle process in a single graph, i.e., CCVGAE which takes an adjacency matrix and node features as inputs.    
 The right-hand component shows the consistency property of generative factors obtained by CCVGAE.
 This property implies that merging generative factors from other graphs can be adapted to others, and thus leads to an extension of CCVGAE, called CC-Meta-Graph. Here, we assume all graphs are sampled from the same graph (i.e., $\mathcal{G}$). $\bf \Tilde{G}_n$  represents the approximated concepts by merging the individual concepts ($\mathbf{G}_i$) obtained from the samples ($\mathcal{G}_i$).}
    \label{fig:intro_overview}
\end{figure}


The goal of disentanglement learning is to acquire the representations that capture all  interpretable generative factors,
called disentangled representations~\citep{bengio2013representation,higgins2018towards}.
A  significant challenge of disentanglement learning is that we often only have raw observations while not allowing any supervision on generative factors
(i.e., causes)~\citep{kumar2017variational}.
Earlier attempts~\citep{paige2017learning, yang2021causalvae} often demand adequate labels for training and hence  can not fit the above realistic setting.
This motivates us to focus on the unsupervised setting.
Recent advances in unsupervised disentanglement learning have mostly focused on Variational Auto-encoders (VAEs)~\citep{Li2018DisentangledSA} and Generative Adversarial Networks (GANs)~\citep{kocaoglu2017causalgan}.
In particular, the VAE framework is preferred in graphs because of its stability in contrast to mode collapse in GANs due to its implicit modeling of the distribution, which is especially difficult to learn the distribution of graphs.
So in this work, our focus is on the VAE framework to explore disentanglement for graph representation learning, i.e.,  Variational Graph Auto-Encoders (VGAE)~\citep{Thomas2016}. 



Despite the recent growth of disentanglement learning, most state-of-the-art methods within the VAE framework have assumed that the distributions in the hidden space are independent Gaussian~\citep{kim2018disentangling} and thus lead to suboptimal solutions~\citep{trauble2021disentangled}.
Studies~\citep{locatello2020commentary,Trauble2020OnDR} have shown that disentanglement of representations is nearly impossible under the independent assumption when the data demonstrates intrinsic correlations.
In contrast, modeling the structure for underlying factors enhances disentanglement, particularly causal structure learning~\citep{scholkopf2022statistical}.
However, when leveraging the VAE framework, there is no adequate research on the optimal solution while imposing a causal structure on the latent factors.

\eat{
underlying generative factors (optimal)

Together with the VAE framework for graph representations, 

leads to the challenge of ensuring the optimality of latent factors while imposing causal structure on them. 

We need to answer such question: How could causality enhance disentanglement learning, and to what extend can we approach the optimal disentanglement, i.e., acquire underlying factors?

However, only in a few cases do we observe clear causalities, such as the speed and gravity as in~\citep{yang2021causalvae}, making SCM impractical, given its need for pre-defined concepts and their labels.\textcolor{blue}{It is also problem about unsupervised or supervised, mentioned in last paragraph. Suggest to skip this sentence.}
In addition, leveraging the VAE framework for graph representations leads to the challenge of ensuring the optimality of latent factors while imposing causal structure on them. \textcolor{blue}{Too fact to "optimal". we can say: We need to answer such question: How could causality enhance disentanglement learning, and to what extend can we approach the optimal disentanglement, i.e., acquire underlying factors?}
}

In this paper, we attempt to address {unsupervised} causal disentangled representation learning in the VGAE framework, including theoretical analysis and practical methodologies.
We prove a tight upper bound on approximating the optimal latent factor via causal structure learning.
It indicates that a linear causal modeling function can approximate the optimal latent factor with high confidence.
With this, we then develop a practical causal disentanglement method without requiring concept labels, called concept-free causal disentanglement.
In this way, we achieve a data-driven causal structure modeling that directly learns concept structures from data.
Building on this, we introduce a novel causal disentanglement layer and then integrate it with VGAE, resulting in our first model, called Concept-free Causal VGAE (CCVAGE).
Besides, we uncover the consistency of our obtained concepts due to the data-driven style, making them suitable for capturing underlying global information with little data.  
Towards this, we propose a meta-learning model that transfers global-aware concepts to newly arrived data, resulting in our second model, called CC-Meta-Graph.
In Figure~\ref{fig:intro_overview}, we present an illustration of the proposed ideas.

We highlight the contributions of this paper:\\ 
1,  In this paper, we theoretically prove a tight bound on the approximation of optimal factors and offer a practical causal disentanglement method on top of it, called concept-free causal disentanglement.
\\
2, We propose two causal disentanglement-enhanced models: one is to support causal disentanglement in VGAE, and the other is to validate the proposed consistency property.
\\
3, We conduct extensive experiments with synthetic and real-world graph data to demonstrate the efficiency of our proposed models in terms of link prediction, achieving up to $29\%$ and $11\%$ absolute improvements for CCVGAE and CC-Meta-Graph, respectively. \footnote{The experiments code can be found in\\ $https://www.dropbox.com/sh/c8nd1qbpb20ling/AABhhj\\rlRGOF4X5h-osw_0aza?dl=0$}

\eat{
In this paper, we attempt to consider both the unavailability of labels and the richness of the underlying latent factors' structure to scale well for realistic settings.
A practical approach to modeling the underlying factor structure for disentanglement is to leverage causal structure learning due to its well-established capabilities~\citep{}.
Often, the Structural Causal Model (SCM)~\citep{shimizu2006linear}, is employed for modeling causal structures.
However, only in a few cases do we observe clear causalities, such as the speed and gravity as in~\citep{yang2021causalvae}, making SCM incompetent, given its need for pre-defined concepts and their labels.
In addition, leveraging the VAE framework for graph representations leads to the challenge of ensuring the optimality of latent factors while imposing causal structure on them.
}

\eat{
A practical approach to handling the structure of the underlying factors is the Structured Causal Model\textcolor{blue}{Jump too fast. I suggest "...factors is causal model, especially SCM..." } (SCM)~\citep{shimizu2006linear}, which offers rigorous disentangled representations for existing models \textcolor{blue}{ rigorous?}~\citep{yang2021causalvae,suter2019robustly}.
However, only in a few cases do we observe clear causalities, such as the speed and gravity as in~\citep{yang2021causalvae}, making SCM incompetent when having no access to pre-defined concepts and labels.
}

\eat{
Given the above findings, how might one obtain the disentangled representation for VAGEs from data without independent underlying factors?
Our key idea is to model non-independent underlying factors with causality to ensure disentanglement in the representation.
As a commonly used causality model, the Structured Causal Model (SCM)~\citep{shimizu2006linear} opens an avenue to effectively capture structures in factors, along with pre-defined concepts and exogenous factors, offering rigorous disentangled representations for existing methods~\citep{yang2021causalvae,suter2019robustly}.
But the above question is far from being solved by these SCM-based methods, given the challenge of defining concepts in advance.
Only in a few cases do we observe clear causality, such as the speed and gravity as in~\citep{yang2021causalvae}, making SCM an impractical method to accomplish causality and hence to perform disentanglement.
Furthermore, another drawback in this context is the requirement for supervision~\citep{yang2021causalvae, locatello2020commentary}.
Nonetheless, supervised information is generally too expensive to be available, making unsupervised disentanglement necessary.
}

\eat{
Can we instead use concepts without priors but learn them directly from data? 
In other words, we can simplify our disentanglement learning problem via a data-driven style, relaxing the problem to latent structure learning with SCM-alike disentangled constraints. 
For this purpose, we reformulate the supervised information, i.e., the exogenous factors in SCM, \textcolor{red}{as a learnable parameter}, 
and use \textcolor{red}{the endogenous// wrong} to infer the causal structure of the latent factors, 
called \textit{concept-free SCM}.
\todo{concept definition:  目前我们学习到的是一种高纬度表征，而非显示的定义具备物理意义的 外生变量，所以我们学习到的 不能叫 concept，但是具备concept在模型中的作用，只是无法人类解释}
Following, how can we perform unsupervised disentangled representation learning?  
As supervision alternatives, we make a mild assumption that the latent space of observations and causal disentangled representations share the same distribution, ensuring consistency in representation learning.
}

\eat{
Our primary contribution in this work is a causal disentangled representation learning layer for the framework of VAGE that enables disentangling representations without supervision, with the proposed concept-free SCM and consistent representation learning..........................
}



\eat{
\begin{figure}
    \centering
    \includegraphics[width=0.40\textwidth]{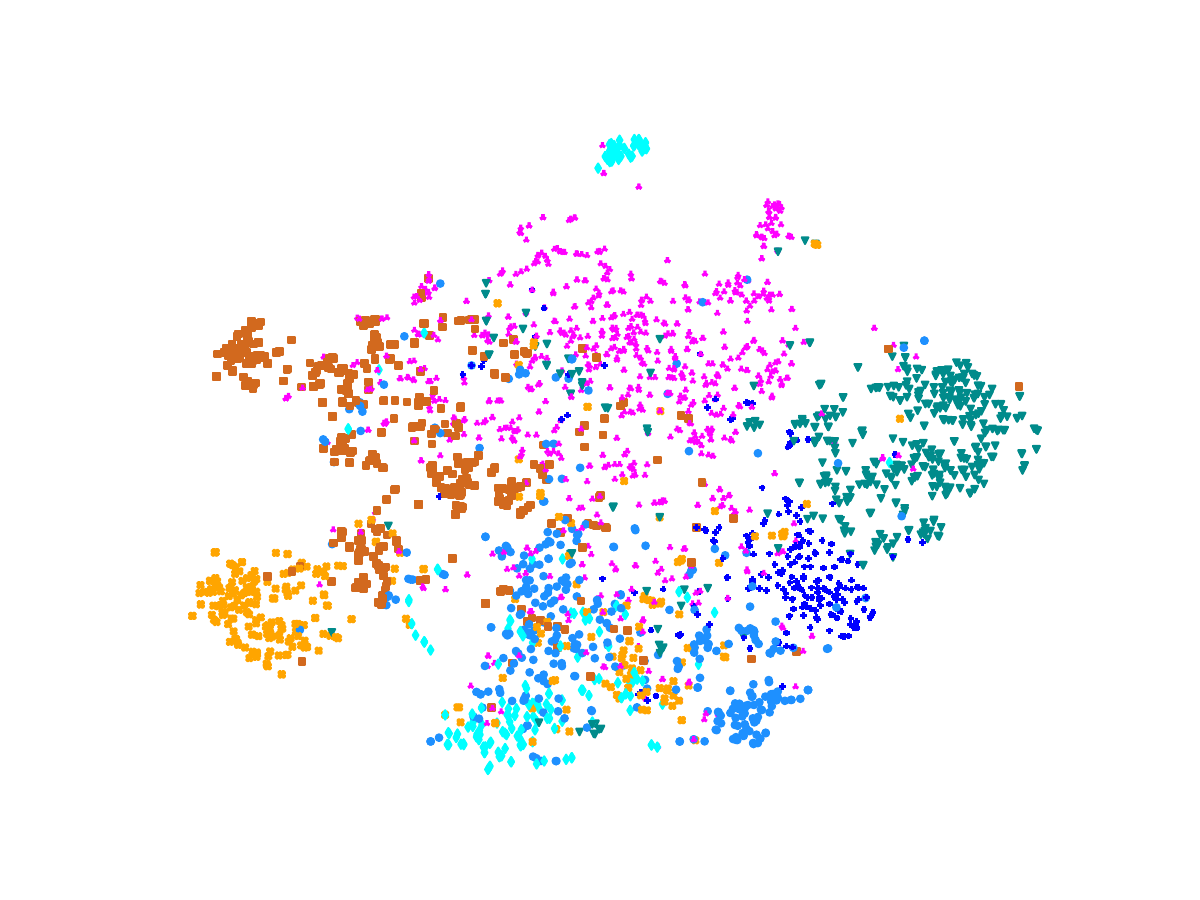}
    \caption{Visualization of latent representation of CCVGAE for Cora. Each color denotes a document class, which is unknown in training. The same class nodes have similar latent representation, suggesting that we acquire underlying concept "class".}
    \label{fig:synthetic AUC}
\end{figure}
}

\eat{
Analyzing graph data is an important machine learning task with variety of applications. Traffic network, social network and recommendation system are datasets which can be modeled as graph. Each node in graph represents an agent (for example, people, and genes), and edge represents the interaction between agents. The main challenge of link prediction analysis, clustering, or node classification of graph data sets is how to deploy the structural information in the graph model. The graph representation learning aims to sum up the feature vector of image structure information in the low dimensional space, which can be used for downstream analysis tasks.

Most methods assume that each node can be embedded into a certain potential space \citep{belkin2001laplacian}, \citep{ahmed2013distributed}, \citep{tang2015line}, \citep{armandpour2019robust}, but modeling uncertainty is crucial in many applications. To solve this problem, the variational image autoencoder (VGAE) \citep{kipf2016variational} embeds each node into the space of random variables. Although it is very popular, 

1) the assumption of independent Gaussian distribution in hidden space is not common in real data sets. For example, in medical data sets, genes are often an important factor in the correlation between diseases, but different genes are often related, which leads to that the connection probability between disease nodes does not simply depend on the product of gene marginal distribution. 

2) Some studies have pointed out that it is difficult to obtain the low dimensional disentangled representation, which makes the representation can not be used for a large variety of tasks and domains.

In the disentangled representation learning, some research \citep{trauble2021disentangled} formally described the case when data has correlation, and theoretically proved that in this case, the VAE framework could not find a independent distribution in latent space to make the posterior distribution equal to the real data, which is opposite from their assumption (or independent assumption will lead to sub-optimal solution). In recent years, some researches have combined causality and disentangled representation learning and proposed causal disentanglement\citep{suter2019robustly}. They try to model the correlation of the generation factors by causal learning methods but mostly stay in theory. 

Inspired by the causal disentanglement, for 1): we have proved that when the causal disentanglement process meets certain conditions, we can find the optimal low dimensional representation when data set has correlation between generator factors. For 2): we improve the VGAE framework based on this theory, and experiments show that the low dimension representation of one graph can be applied for other graphs.
}
	\section{Related work}

{\bf Disentanglement Learning.}
The concept of disentanglement was first introduced by ~\citet{bengio2013representation} as a property of representation and  its formal  definition~\citet{higgins2018towards} is: if a representation can be decomposed into several independent features, which means only one of these features will change when change one factor of data input, then we call it \textit{"disentangled representation"}. 
Some studies ~\citep{eastwood2018framework} consider a more rigorous definition that only if each dimension of the representation can capture at most one true generative factor, we can call this representation a "disentangle representation".
In order to encourage potential factors to learn disentangled representations while optimizing the inherent task objectives, disentangled representation learning is designed to capture interpretable, controllable and robust representations.

In graph disentangled representation learning, most frameworks are GNN-based. DisenGCN~\citep{ma2019disentangled} utilizes neighbourhood routing to identify the latent factor that may caused edges, FactorGCN ~\citep{yang2020factorizable} disentangle graph into several sub-graphs, each sub-graph represent graph composed of one type of edges. However, ~\citet{fan2022debiasing} noticed that GNNs always suffer from spurious correlation, even if the causal correlation always exists. They proposed DisC to learn causal substructure and bias substructure. DisC requires some of input graph nodes represent concepts. However, such graphs are always unavailable in real world.

Most typical disentangled representation learning methods are generative models, especially Variational Auto-Encoder(VAE)~\citep{higgins2016beta,kumar2017variational,yang2021causalvae,zhu2021commutative}.
VAE use a variational posterior $q(z|x)$ to approximate the unknown true posterior $p(z|x)$.To obtain better disentanglement ability, researchers design various extra regularizers based on the original VAE loss function. A penalty coefficient is introduced to ELBO loss by $\beta$-vae ~\citep{higgins2016beta} to strengthen the independence constraint of the variational posterior distribution $q (z|x)$. FactorVAE ~\citep{kim2018disentangling} imposes independence constraint according to the definition of independence. 
However, better disentanglement ability often leads to more reconstruction errors. 
To balance the trade-off between reconstruction and disentanglement, \citet{burgess2018understanding} proposes a simple modification based on $\beta$-VAE, making the quality of disentanglement can be improved as much as possible without too much reconstruction error. 
However, we believe that the conflict between reconstruction error and disentanglement quality does not naturally exist but from the improper disentanglement such as independent assumption as follows.  
~\citet{higgins2018towards} assumed that the generating factors were natural and independent in disentangled representation learning. 

However,  ~\citet {suter2019robustly} disagreed with the independence assumption. They assumed that the generating factors of the observable data are causally influenced by the group of confounding factors, and first introduced SCM~\citep {krajewski2010rh} to describe causal relationships among generating factors. ~\citet{trauble2021disentangled} suggested that, if some generating factors are correlated in the data set, methods based on independent assumption might have a bias against disentanglement. Other researchers ~\citep{yang2021causalvae,shen2022weakly} have also taken experiments of disentanglement learning in real-world data based on the assumption that the real-world data is not generated by independent factors. 
Here, we also follow the same assumption that generates factors are not independent and even believe there are underlining causal relationships among these factors.

{\bf Causal Disentanglement.}
Over the past decades, many researchers ~\citep{hoyer2008nonlinear,zhang2012identifiability,shimizu2006linear}  have paid attention to the discovery of causality from observational data.
With the development of disentanglement learning, the community has raised the interest in combining causality and disentangled representation. ~\citet{kocaoglu2017causalgan} proposed a method called CausalGAN which supports ”do-operation” on images but it requires the causal graph given as a prior. ~\citet{suter2019robustly} believed that the underlying causal generative process will impact the level of disentanglement, and firstly proposed the definition of the causal disentanglement process.

\citet{yang2021causalvae} is the first to implement the causal disentanglement process proposed by~\citet{suter2019robustly}, called CausalVAE.  However, their method is semi-supervised because they require labels of generative factors.
 But such labels are uneasily acquired in the graph, so our work concentrates on an unsupervised method, which makes latent variables learn underlying causal information from data.  


	\section{Notations and Preliminaries}

{\bf Notations.}
Formally, let  $\mathcal {G = (V, E)}$ denotes an undirected and unweighted graph, with its adjacency matrix and degree matrices as ${Adj}=\{0,1\}^{n \times n}$ 
and $D$, respectively. Here, $ \mathcal V $ and $\mathcal E$ are the node and edge sets, where $ n = |\mathcal V|$. 
Nodes are associated with pre-defined attributes, written as ${Attri} \in \mathbb{R}^{n}$.
We use $\mathcal{N}(\cdot)$ to represent Gaussian distribution.


{\bf Variational Graph Auto-Encoder (VGAE).}
In VGAE, the high-dimensional observation $\varepsilon$ is projected into a low-dimensional space for compact representation $z$ using an encoder-decoder framework.
Concretely, the encoder compresses the input data, and the decoder checks the soundness of the compressed one by recovering raw data.
Mathematically, we often optimize VAE by maximizing evidence lower bound (ELBO), denoting as 
$\max _{p, q} \mathcal{L}_{\text {VAE }}(\varepsilon), \text { where } \mathcal{L}_{\text {VAE}}(\varepsilon)=\mathbb{E}_{q(z \mid \varepsilon)}[\log p(\varepsilon \mid z)]-D_{\mathrm{KL}}(q(z \mid \varepsilon) \| p(z))$, where $D_{\mathrm{KL}}$ is the Kullback-Leibler divergence~\citep{joyce2011kullback}. 
\textbf{VGAE is a special cases of VAE}, with graph-based functions for $q(\cdot)$ and $ p(\cdot)$. 
Using graph convolutional network (GCN)~\citep{zhang2018graph}, we have them defined as follows: $p\left(Adj_{i j}=1| \varepsilon _i, \varepsilon _j\right)=\sigma\left(\varepsilon _i^{\top} \varepsilon _j\right)$ and $q\left ( \varepsilon_i |Adj,Attri\right ) = N\left  (\mu_i , diag(\sigma_i)\right )$, where 
the  mean is 
$\mu = GCN_\mu(Adj,Attri)$ and  covariance is $\sigma = GCN_{\sigma}(Adj,Attri)$.
Here, $\sigma(\cdot)$   is an activation function and  takes the logistic Sigmoid function by default.

\eat{ We are given an undirected, unweighted graph $\mathcal {G = (V, E)}$ with $ n = |\mathcal V|$ nodes with features. Denote the graph adjacency matrix as a $n \times n$ matrix $A$ and the node features as a $n \times d$ matrix $X$. Denote the degree matrix of $A$ as $D$.

The process of Variational Graph Auto-Encoders includes Encoder (Inference model), which aims to get latent representation in low dimension space, and Decoder (Generative model), which aims to reconstruct adjacency matrix $A$. These two models can be described as follows:\\

(1) Encoder:\\
We denote latent variables as $K$ dimensions vectors $\varepsilon _i, i=1,2,...,n$, which can be summarized in a $n \times K $ matrix $\bm{\varepsilon}$. Assume that each $\varepsilon _i$ is independent normal random vector, $ i=1,2,...,n$.

Denote covariance and mean matrix of $\bm{\varepsilon}$ are $\sigma_{n \times K\times K}$ and $\mu_{n \times K}=\{\mu_i\}_{i=1,2,...,n}$ , 
in which $\mu_i$ is $K$ dimensions vector. Because each $\varepsilon _i$ is independent, the covariance matrix is diagonal matrix, then $\sigma$ can be represented by $\sigma=\{{diag(\sigma_i)_{K \times K}}\}_{i=1,2,...,n}$, in which $\sigma_i$ is $K$ dimensions vector. We denote $\{\sigma_i\}_{i=1,2,...n}=\sigma'$, then $\sigma'$ is a $n\times K$ matrix, which could absolutely determine $\sigma$. 

In Variational Graph Auto-Encoders, two layers GCN are adopted to get $\mu$ and $\sigma'$: 
 \begin{equation}
 \label{two GCN}
\begin{aligned}
\mu = GCN_\mu(A,X)=A_{norm} ReLU(A_{norm} W_0 X) W_1\\
\sigma' = GCN_{\sigma}(A,X)=A_{norm} ReLU(A_{norm} W_0 X) W_2\\
\end{aligned}
\end{equation}

In which $W_i$ is weight matrix, $i=0,1,2$. $A_{norm}$ is the symmetrically normalized adjacency matrix:\\
 \begin{equation}
\begin{aligned}
A_{norm} = D^{-\frac{1}{2}} AD^{-\frac{1}{2}}\\
\end{aligned}
\end{equation}

Note that first layer of $GCN_\mu(A,X)$ and $GCN_{\sigma}(A,X)$ share the same weight matrix with each other.
Then the distributions of latent variables are:
\begin{equation}
\label{encoder}
\begin{aligned}
q\left ( \varepsilon_i |A,X\right ) = N\left  (\mu_i , diag(\sigma_i)\right )\\
\end{aligned}
\end{equation}

(2) Decoder:\\
This process aims to regenerate the adjacency matrix $A$ in graph $\mathcal {G = (V, E)}$. Then decoder of adjacency matrix is given by an inner product between $\varepsilon _i$: \cite{b2}

 \begin{equation}
\begin{aligned}
\label{decoder}
p\left(A_{i j}=1| \varepsilon _i, \varepsilon _j\right)=\sigma\left(\varepsilon _i^{\top} \varepsilon _j\right)
\end{aligned}
\end{equation}
}

 

{\bf Linear Structured Causal Model (SCM).}
Linear SCM defines a causal system with linear equations representing the semantics as follows~\citep{shimizu2006linear,yang2021causalvae}, with independent exogenous factors $\epsilon \in \mathbb{R}^K$ and endogenous variables $z \in \mathbb{R}^K$,
\begin{equation}
    \begin{aligned}
        {z}={\Phi}^T {z}+{\epsilon}=\left(I-{\Phi}^T\right)^{-1} {\epsilon}, {\epsilon} \sim \mathcal{N}({0}, {I})
    \end{aligned}
\end{equation}
where $\Phi\in \mathbb{R}^{K \times K}$ is an adjacency matrix for a directed acyclic graph (DAG) that captures the causal structure of $n$ concepts. \textit{Note we use the same letters here to avoid the abuse of notation.}

{\bf Disentangled Causal Process (DCP).}
DCP studies disentanglement in the latent space by considering confounding variables, which results in theoretically sound properties as opposed to heuristics in the prior~\citep{suter2019robustly}.
Its detailed definition is as follows.
\begin{defi}
Given $m$ causal generative factors as ${G}=\left[G_1, \ldots, G_m\right]$ and $L$ confounders as $C =\{C_1, ... ,C_L\}$, 
causal disentanglement for the observation  $X$  is possible if and only if $X$ can be represented in a SCM context as follows,
\begin{equation}
\label{C}
C \leftarrow {\zeta}_C
\end{equation}
\begin{equation}
\label{G}
G_i \leftarrow q_i(H_i^C,\zeta_i),i=1,2,...,m
\end{equation}
\begin{equation}
\label{X}
X \leftarrow g(G, \zeta_x)
\end{equation}
Here,   
$H_i^C \in \{C_1, ... ,C_L\}$ is the father node of $G_i$, i.e.,
$H^C_i \rightarrow G_i$ holds regarding causality. ${\zeta}_c, \{\zeta_i\}_{i=1,2,...,m}, \zeta_x$ are independent noise variables. 
{Note $q_i$ and $g$ are predefined functions.}
\end{defi}


\eat{

{\bf Sub-optimality in VAEs.}
Note that real-world data often has built-in correlations, with its distribution written as $ p^{*}(X)=\int_{Z^*}p^{*}(X|{Z^*})p^{*}(Z^*)dZ^*$, where $p^{*}(Z^*)\neq \prod_{i}p(Z^*_i)$.
Typical VAEs will produce latent representation $Z$ and infer the distribution for $X$ as $ p_{\theta}(X)=\int_{Z}p_{\theta}(X|Z)p(Z)dZ$, where $p(Z)= \prod_{i}p(Z_i)$ and $\theta$ denotes the learnable parameters.
\citet{trauble2021disentangled} has shown that when equating $p_{\theta}(X)$  to the optimal $p^{*}(X)$, the process tends to obtain a sub-optimal solution.
Therefore, the VGAEs share the same problem and we will address this common problem in the following.
}

\section{Theory}

In this section, we first formulate the problem of causal disentangled representation learning in VGAE.
We next present a theoretical analysis of causal disentanglement, where we provide a tight upper bound to approximate the optimum (see Section~\ref{sec:approximation}).
After that,   we introduce a practical solution to accomplish this approximation together with its properties (see Section~\ref{sec:cc_dis}).





{\bf Problem Formulation.}
Denote the input graph data as $X$ and its optimal latent factor as $Z^*$~\citep{trauble2021disentangled}, the optimal data distribution is formulated as   $ p^{*}(X)=\int_{Z^*}p^{*}(X|{Z^*})p^{*}(Z^*)dZ^*$, along with an non-i.i.d. assumption on $Z^*$, i.e., $p^{*}(Z^*)\neq \prod_{i}p(Z^*_i)$.
As a common solution to disentanglement,  VGAE disentangles input data into latent representation $Z$, and
the corresponding data distribution is $ p_{\theta}(X)=\int_{Z}p_{\theta}(X|Z)p(Z)dZ$, where $\theta$ is the learnable parameters.
{
Given that all correlations can be modeled as causal structures~\citep{scholkopf2022statistical}, we let $Z$ possess a causal structure and define this structure with DCP.
Having no labels from $Z$,
the goal of unsupervised causal disentangled representation learning in VGAE is to achieve an optimal latent factor  $Z^*$  while making  $p_{\theta}(X) = p^{*}(X)$ always hold.
}


\subsection{ A theoretical analysis of causal disentanglement } 
\label{sec:approximation}

\begin{defi}
Given the the process~(\ref{C}) and~(\ref{G}) in the DCP, we obtain the generative factors as  $ G=Q({\zeta}^{+})$, with $\zeta_G=\{\zeta_1,\zeta_2,...,\zeta_m\}$ and $ \zeta^{+}={\zeta_C}\cup {\zeta_G}$.
The distribution of data  can be attained as $p_{\theta_{\zeta_x}}(X)=\int_{G}p_{\theta_{\zeta_x}}(X|G)p(G)dG$.
\end{defi}

This definition suggests that a given data can be represented with causal generative factors while having no assumption of independence as in the previous VAE. 
Based on this, as we will see in Theorem~\ref{P1}, the causal disentanglement guarantees an optimal solution to attain the following: 1) the distribution consistency between the input data and the predicted one, i.e., $p_{\theta}(X) = p^{*}(X)$, and 2) the optimal latent factor $Z^*$.
{In contrast, traditional VAE imposes an independence assumption on Z and suffers a sub-optimality solution~\citep{trauble2021disentangled} w.r.t the true data distribution, leading to $p_{\theta}(X) \neq p^{*}(X)$.}
With this difference, the above causal disentanglement {avoids such a assumption}.


\begin{theo}
\label{P1}
Given independent Normal distributed variables $N = {N_1, ..., N_K}$, there exists an optimal causality
modeling function $Q$ that represents the causal generative factor $G =  \{G_1, ..., G_K \}=Q(N)$, equating to the optimal disentangled latent factor $Z^*$, while holding $p_{\theta_{\zeta_x}}(X) = p^{*}(X)$.
\end{theo}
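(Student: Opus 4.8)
The plan is to prove Theorem~\ref{P1} constructively by a measure-transport argument: I will exhibit a map $Q$ that transforms the standard Gaussian law of $N=\{N_1,\dots,N_K\}$ into the law of the optimal latent factor $Z^*$, together with a generative mechanism $X\leftarrow g(G,\zeta_x)$ that reproduces the true conditional $p^*(X\mid Z^*)$, and then check that the $X$-marginal induced by $(Q,g)$ collapses to $p^*(X)$. Throughout I would use the standing regularity carried over from~\citet{trauble2021disentangled}: that $p^*(Z^*)$ is absolutely continuous on $\mathbb{R}^K$ and that $p^*(X\mid Z^*=\cdot)$ is a well-defined Markov kernel.

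First I would construct $Q$. By the Knothe--Rosenblatt rearrangement --- equivalently, coordinatewise inverse-transform sampling composed with the Gaussian CDF --- there is a measurable triangular map $Q=(Q_1,\dots,Q_K)$, with $Q_i$ depending only on $N_1,\dots,N_i$, such that $Q(N)\overset{d}{=}Z^*$ when $N\sim\mathcal N(0,I_K)$. Setting $G:=Q(N)$ gives $p(G)=p^*(Z^*)$, so $G$ equals the optimal disentangled factor in distribution. The triangularity is the structural crux: $G_i=Q_i(N_1,\dots,N_i)$ means the latent coordinates are generated in a fixed topological order, i.e.\ $Q$ realizes an SCM-like acyclic mechanism on the latent variables in the sense of the DCP, with the confounders and their noises absorbed into the components of $N$; this is also what makes the later linear-SCM approximation, $G\approx(I-\Phi^T)^{-1}\epsilon$, meaningful.

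Second I would specify the decoder. Since $p^*(X\mid Z^*=z)$ is a probability kernel, a further application of the same transport fact --- now conditionally, for each $z$ --- yields a measurable $g$ and an independent noise $\zeta_x$ with $g(z,\zeta_x)\sim p^*(X\mid Z^*=z)$; calling this choice $\theta_{\zeta_x}$, we have $p_{\theta_{\zeta_x}}(X\mid G=z)=p^*(X\mid Z^*=z)$. Marginalizing over $G$ and using $p(G)=p^*(Z^*)$,
\[
p_{\theta_{\zeta_x}}(X)=\int p_{\theta_{\zeta_x}}(X\mid G=z)\,p(G=z)\,dz=\int p^*(X\mid Z^*=z)\,p^*(Z^*=z)\,dz=p^*(X),
\]
which is the claimed consistency. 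Optimality of $G$ is then immediate: $Z^*$ is by definition the latent factor attaining $p_\theta(X)=p^*(X)$ without the independence constraint, and $G$ has been matched to it, so no DCP-compatible mechanism can do strictly better.

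The main obstacle is conceptual rather than computational: one must pin down the precise sense in which $G$ ``equates to'' $Z^*$ --- distributional equality, or an explicit coupling of $N$ with $Z^*$ on a common space, since pointwise equality of the random variables is neither available nor needed --- and argue that $p^*(X)$ is indeed the best any DCP mechanism can reproduce. The remaining work is routine measure-theoretic bookkeeping: existence and measurability of the Knothe--Rosenblatt maps, the disintegration defining $p^*(X\mid Z^*)$, and the Fubini step above. I would discharge these via the standing absolute-continuity assumption and standard results on triangular transport maps, rather than grinding through explicit densities.
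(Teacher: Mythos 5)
Your construction is essentially the paper's own proof: the paper likewise builds $Q$ as a triangular (Knothe--Rosenblatt-type) transport, pushing each $N_i$ through its Gaussian CDF to an independent uniform $U'_i$ (Lemma~\ref{L2}) and composing with the triangular maps $f_i(U_1,\dots,U_i)$ of Lemma~\ref{L1} so that $G_i=q_i(N_1,\dots,N_i)=Z^*_i$, with the caveat that the paper's regularity hypothesis is convexity of the support of $p^*(Z^*)$ rather than absolute continuity. The only addition on your side is the explicit decoder/marginalization step verifying $p_{\theta_{\zeta_x}}(X)=p^*(X)$, which the paper leaves implicit once $G$ is matched to $Z^*$.
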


We defer the proof of Theorem~\ref{P1} to the Appendix~\ref{App1}.
Theorem~\ref{P1} proves that in a casual setting, there must \textbf{exist an optimal solution} for the VAE.
Next, we introduce a generalized causal generative factor expression that unifies the base for causal disentanglement.
\begin{defi}
\label{def:general_factor}
(general causal generative factor expression).
Given independent Normal distributed variables $ N=\{N_1,...,N_K\}$ and a 
matrix $A\in \mathbb{R}^{K \times K}$, 
any causal generative  factor can be formulated as $G_i=Q_i(B_i)$, 
where $B =  A * {diag(N)}$. 
\end{defi}

According to Theorem~\ref{P1} and the above definition, we attain a \textbf{unified optimal expression} of generative factor as follows (detailed proof is at Appendix~\ref{App2}):
\begin{prop}
\label{prop:tri_A}
(unified optimal generative factor expression).
Let $A$ as a lower triangular 
matrix, then the expression of optimal generative factors can be unified as 
$G_i=Q_i(B_i)$, 
where $B =  \hat{A} * {diag(N)}$ and $\hat{A}$ is permuted from $A$. 
\end{prop}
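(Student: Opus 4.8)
The plan is to reduce the statement to the classical fact that a DAG becomes triangular once its vertices are listed in a topological order, and then to check that the representation of Definition~\ref{def:general_factor} is unchanged — up to a harmless relabeling — when the factors, the noises and the maps $Q_i$ are simultaneously permuted. So the content splits into a ``structure'' part (triangularization) and an ``equivariance'' part (the permutation does not break the form $G_i=Q_i(B_i)$).

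First I would invoke Theorem~\ref{P1} together with the linear SCM: the optimal disentangled factor $Z^{*}$ is the image $G=Q(N)$ of $K$ independent Normal variables under a mechanism whose causal structure among the $K$ concepts is a DAG with adjacency matrix $\Phi$, so that the matrix attached to $G$ through Definition~\ref{def:general_factor} may be taken to be $A=(I-\Phi^{T})^{-1}$, with $G_i=Q_i(B_i)$ and $B=A*diag(N)$. Since $\Phi$ is the adjacency matrix of a DAG, its vertices admit a topological ordering; let $P$ be the corresponding permutation matrix. In that ordering every edge runs from a smaller to a larger index, hence $P\Phi P^{T}$ is strictly upper triangular, $P\Phi^{T}P^{T}$ is strictly lower triangular, and therefore $I-P\Phi^{T}P^{T}=P\,(I-\Phi^{T})\,P^{T}$ is lower triangular with all diagonal entries equal to $1$; such a matrix is invertible with determinant $1$ and its inverse is again lower triangular, so $\hat{A}:=P\,(I-\Phi^{T})^{-1}\,P^{T}=PAP^{T}$ is a lower-triangular matrix obtained from $A$ by the permutation $P$.

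It then remains to transport the expression $G_i=Q_i(B_i)$ along $P$. Writing $\pi$ for the permutation realized by $P$ and $\widehat{N}=PN$ (again a vector of independent Normals, being a reordering of independent Normals), a direct computation of entries gives $diag(PN)=P\,diag(N)\,P^{T}$, and since $*$ commutes with simultaneous row/column permutation we get $\widehat{B}:=\hat{A}*diag(\widehat{N})=P\,(A*diag(N))\,P^{T}=PBP^{T}$, so that the $i$-th row of $\widehat{B}$ is the $\pi(i)$-th row of $B$ with its entries reordered by $\pi$. Composing $Q_{\pi(i)}$ with this fixed reordering yields a map $\widehat{Q}_i$ with $\widehat{G}_i:=G_{\pi(i)}=\widehat{Q}_i(\widehat{B}_i)$; renaming $\widehat{G},\widehat{Q},\widehat{B},\widehat{N}$ back to $G,Q,B,N$ gives precisely the claimed unified form with the lower-triangular $\hat{A}$. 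Optimality and the identity $p_{\theta_{\zeta_x}}(X)=p^{*}(X)$ are inherited from Theorem~\ref{P1}, since permuting coordinates changes neither the joint law of $G$ nor the induced law of $X$.

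The step I expect to be the real obstacle is the very first one: making precise that the optimal $Q$ delivered by Theorem~\ref{P1} genuinely factors through a DAG-structured map of the noises — so that the Definition~\ref{def:general_factor} matrix may be chosen as $(I-\Phi^{T})^{-1}$ rather than some arbitrary dense matrix — and in particular that this matrix has unit diagonal, so that triangularity survives inversion. Once that linkage is nailed down, everything else is the bookkeeping above about the permutation equivariance of $A*diag(N)$, which is routine.
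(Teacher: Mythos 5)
There is a genuine gap, and you have put your finger on it yourself: the linkage you defer to the end is not a technicality but the whole content of the proposition, and the route you chose cannot supply it. Theorem~\ref{P1} does not deliver a linear SCM with a DAG adjacency $\Phi$; its proof (via Lemma~\ref{L1}) delivers generally \emph{nonlinear} maps with a triangular functional dependence, $Z^*_i = q_i(N_1,\dots,N_i)$ as in Equation~\ref{qi}. At this point in the development nothing licenses writing the matrix of Definition~3.2 as $(I-\Phi^{T})^{-1}$: the linear-SCM parameterization only enters later, after Theorem~\ref{theo_linear} justifies replacing the optimal $Q_i$ by linear approximations, and it is there (Eq.~\ref{eq:lscm_G} and Appendix~\ref{Proof3}) that the paper needs and proves the fact you establish — that a topological ordering makes $I-\Phi^{T}$ unit lower triangular with lower-triangular inverse (this is essentially the paper's Lemma~\ref{L3}). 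So your ``structure'' part is correct linear algebra, but it proves a statement about the later linear implementation, not about the optimal factors of Theorem~\ref{P1}.

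The paper's own argument needs none of this. Since $A$ is lower triangular, the $i$-th row of $A * \mathrm{diag}(N)$ is $(A_{i1}N_1,\dots,A_{ii}N_i,0,\dots,0)$: its support is exactly the argument list $(N_1,\dots,N_i)$ of the map $q_i$ produced in the proof of Theorem~\ref{P1}. Hence one may take $Q_i$ to be $q_i$ precomposed with the inverse of the nonzero scalings $A_{i1},\dots,A_{ii}$, giving $G_i = Q_i(B_i) = Z^*_i$ directly; passing to $\hat{A}$ permuted from $A$ only relabels which row feeds which $q_i$. Your permutation-equivariance bookkeeping ($\mathrm{diag}(PN)=P\,\mathrm{diag}(N)\,P^{T}$, etc.) is sound and matches the ``permuted from $A$'' clause, but the foundation of your argument — that the optimal $Q$ factors through $(I-\Phi^{T})^{-1}$ — is both unjustified and unnecessary; the triangular dependence you need is already sitting in Equation~\ref{qi}.
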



Having established the connections between optimal generative factors in Proposition~\ref{prop:tri_A}, we arrive at a necessary condition for optimal factors.
Whereas in this paper we aim to acquire both the necessary and sufficient conditions for the optimal factors.
Solving these two together yields an analytical solution for the optimal factors (at Appendix~\ref{App1}), making the implementation difficult in modern deep architectures.
Such a solution becomes infeasible alongside an unknown distribution for the optimal latent representations.
A practical solution is to \textbf{approximate the optimal factors}, within acceptable confidence, while being practically feasible.

Provided a representation base $B_i$, assume the existence of an approximated generative factor to the optimal one, $Q_i(B_i)$, over the same space, denoted as $Q^{'}_i(B_i)$. 
We derive a tight upper bound on the approximation error by setting $Q^{'}_i$ as a linear function, as shown in Theorem~\ref{theo_linear}.

\begin{theo}
\label{theo_linear}
Given $B_i$ in Proposition~\ref{prop:tri_A}, and 
 $N_i$ with an interval of $N_i \in \left [\mu_i-\delta,\mu_i+\delta \right]$, $i=1,2,...,K$, 
for an optimal $Q_i(B_i)$,
there exist a linear function ${Q^{'}_i}$ make $Q^{'}_i(B_i)$ , 
the absolute error has such bond:$|Q_i(B_i)-Q^{'}_i(B_i)|\le O_i( \delta)$, 
where
{$O_i(\delta)=a_{i}+\delta\Lambda_i$, $\Lambda_i=b_{i}+\sum_{t=1}^{i}c_{it}d_t^{\delta^2}$, $a_i,b_i,c_{it},d_t$ are constant unrelated to $\delta$, $0<d_t< 1$, $i=1,2,...,K$ and $t=1,2,...,K$, $\delta$ is a non-negative real number unrelated to distribution of $N$}
. 
\end{theo}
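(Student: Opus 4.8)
The plan is to exploit the explicit algebraic form of the causal generative factor from Proposition~\ref{prop:tri_A}. Since $B = \hat{A}\ast\mathrm{diag}(N)$ with $\hat{A}$ lower triangular, the $i$-th row $B_i$ depends only on $N_1,\dots,N_i$, each entry being of the form $\hat{a}_{it}N_t$. The optimal factor $Q_i(B_i)$ is therefore a (generically smooth) function of the scalars $N_1,\dots,N_i$, each confined to the box $N_t\in[\mu_t-\delta,\mu_t+\delta]$. The natural candidate for $Q'_i$ is the first-order Taylor expansion of $Q_i$ about the center point $(\mu_1,\dots,\mu_i)$; this is manifestly linear in $B_i$ (hence in $N_1,\dots,N_i$), so it is an admissible competitor. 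First I would write the Taylor remainder in Lagrange or integral form, $|Q_i(B_i)-Q'_i(B_i)|\le \tfrac12 \sup_{\xi}\,\|\nabla^2 Q_i(\xi)\|\cdot\|N_{1:i}-\mu_{1:i}\|^2$, and bound $\|N_{1:i}-\mu_{1:i}\|^2\le i\,\delta^2$, which already gives an $O(\delta^2)$ bound; the remaining work is to show the Hessian supremum over the box can be controlled and massaged into the stated closed form $a_i+\delta\Lambda_i$ with $\Lambda_i = b_i+\sum_{t\le i}c_{it}d_t^{\delta^2}$.

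The key steps, in order: (1) Fix the representation $B_i$ from Proposition~\ref{prop:tri_A} and record that $Q_i$ is a function of $N_{1:i}$ only, on the compact box $\prod_{t\le i}[\mu_t-\delta,\mu_t+\delta]$. (2) Define $Q'_i$ as the linearization of $Q_i$ at the box center and verify it is of the form required (linear in $B_i$). (3) Apply Taylor's theorem with remainder to get $|Q_i-Q'_i|\le C_i(\delta)\cdot i\delta^2$ where $C_i(\delta)$ is the sup of the second derivatives over the box. (4) Bound $C_i(\delta)$ by expanding how the box grows with $\delta$: the second derivatives, evaluated along directions $N_t$, pick up factors governed by the structure of $Q_i$ and of $\hat{A}$; a Gaussian-type or exponential control of the tails of $Q_i$'s derivatives (coming from the Normal structure of the exogenous variables, as used in the proof of Theorem~\ref{P1}) produces terms of the form $c_{it}d_t^{\delta^2}$ with $0<d_t<1$, since widening the interval by $\delta$ multiplies certain exponential weights by $e^{-(\text{const})\delta^2}$. (5) Collect the $\delta$-independent pieces into $a_i$, the linear-in-$\delta$ coefficient into $\Lambda_i=b_i+\sum_{t\le i}c_{it}d_t^{\delta^2}$, and check that all constants $a_i,b_i,c_{it},d_t$ depend only on $\hat{A}$ and $Q_i$, not on $\delta$ or on the realized values of $N$.

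The main obstacle is Step~(4): obtaining the precise functional form $b_i+\sum_{t=1}^{i}c_{it}d_t^{\delta^2}$ rather than a generic $O(\delta^2)$ or $O(\delta)$ bound. This requires tracking exactly how the supremum of the derivative of the optimal map $Q_i$ over the enlarged interval scales — in particular identifying where the $d_t^{\delta^2}$ (Gaussian-tail) factors enter. I expect this to hinge on the closed form of $Q$ recovered in the proof of Theorem~\ref{P1}: there $Q$ inverts the change of variables between the Normal exogenous noise and the correlated latent factors, so its derivatives involve ratios of Gaussian densities whose suprema over an interval of half-width $\delta$ around $\mu_t$ behave like $\exp(\pm c\,\delta^2)$ up to polynomial factors. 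Making that dependence rigorous, and separating the genuinely $\delta$-linear contribution (from the first-order term of the remainder/density expansion) from the constant term $a_i$ (the residual approximation floor), is the delicate accounting that carries the theorem; everything else is routine Taylor estimation and bookkeeping, so I would relegate the explicit constants to the appendix and present only the structure in the main text.
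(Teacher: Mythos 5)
Your plan stalls exactly where you say it does, and the missing ingredient is not a sharper Hessian estimate but a structural decomposition of $Q_i$ that the paper inherits from the proof of Theorem~\ref{P1}. There, $Q_i(B_i)=f_i\bigl(h_1(N_1),\dots,h_i(N_i)\bigr)$, where each $h_t$ is (an affine rescaling of) the normal CDF $g_t$ of $N_t$ and $f_i$ is the conditional-quantile map of $Z^*$. The paper then invokes its standing assumption (stated just after the theorem) that $(Z^*_1,\dots,Z^*_K)$ follows a ``linear uniform distribution,'' under which $f_i$ is \emph{exactly} linear: $f_i(U_1,\dots,U_i)=\rho_{i0}+\sum_{t\le i}\rho_{it}U_t$. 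Consequently the only nonlinearity in $Q_i$ is univariate and Gaussian, and the whole theorem reduces to approximating each normal CDF by its tangent line at the mean. The paper's Lemma in Appendix~\ref{App3} gives the exact gap $\bigl|g_t(x)-f_L(x)\bigr|=\int_0^{x}\tfrac{1}{\sqrt{2\pi}}\bigl(1-e^{-s^2/(2\sigma_t^2)}\bigr)\,ds\le \tfrac{\delta}{\sqrt{2\pi}}\bigl(1-e^{-\delta^2/(2\sigma_t^2)}\bigr)$ on $[\mu_t-\delta,\mu_t+\delta]$, and summing these over $t\le i$ with weights $\rho_{it}$ yields precisely $a_i+\delta\bigl(b_i+\sum_t c_{it}d_t^{\delta^2}\bigr)$ with $d_t=e^{-1/(2\sigma_t^2)}\in(0,1)$. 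Your intuition that the $d_t^{\delta^2}$ factors are Gaussian in origin is right, but they come from this integral identity for the CDF minus its tangent, not from tail control of $\nabla^2 Q_i$.

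Two concrete consequences of skipping that decomposition. First, without the linear-uniform assumption on $Z^*$, $f_i$ need not be linear and $\sup\|\nabla^2 Q_i\|$ over the box is not controlled by anything in the hypotheses, so step~(3) of your plan does not close. Second, even granting a bounded Hessian, your remainder is $O(\delta^2)$, whereas the stated bound behaves like $a_i+O(\delta^3)$ as $\delta\to 0$ (since $1-d_t^{\delta^2}=O(\delta^2)$); this reflects the fact that the normal density is critical at its mean, so the tangent line matches the CDF to second order --- a feature a generic multivariate Taylor bound on $Q_i$ cannot see. To repair the argument, import the composition $Q_i=f_i\circ(h_1,\dots,h_i)$ and the linearity of $f_i$ from Appendix~\ref{App1}, and replace your steps (3)--(4) with the univariate CDF estimate above.
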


Please see Appendix~\ref{App3} for more details.
Theorem~\ref{theo_linear} suggests that over $95\%$ probability,  the range of $N_i$ is within $\left[\mu_i-2\sigma_i,\mu_i+2\sigma_i\right]$ and hence the error is bound by  $O_i( 2\sigma_i)$ , i.e., the  bound is nearly constant with $95\%$ confidence.
Note that we assume the optimal latent representation $(Z^*_1,Z^*_2,...,Z^*_K)$ as a linear uniform distribution.
One could arrive at different bounds with distributions, and we take the uniform distribution for simplicity.

\subsection{Concept-free Causal Disentanglement}
\label{sec:cc_dis}
Theorem~\ref{theo_linear} says we can obtain an approximated optimal generative factor by appointing the projection function linear.
This approximation enables a practical implementation toward the optimal latent factor.
Formally, we introduce the linear projection-based generative factor: 
\begin{prop}
\label{def:general_factor}
(Approximated generative factor expression).
Given independent Normal distributed variables $ N=\{N_1,...,N_K\}$, a lower triangular matrix $\Tilde{A}_i$, 
a causal generative  factor can be formulated as $G'_i=Q'_i(B_i)=\Tilde{A}_i*N'$, where $\Tilde{A} $ is obtained by  permuting a lower triangular matrix. 
\end{prop}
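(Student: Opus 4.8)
The plan is to obtain this Proposition as a direct corollary of Proposition~\ref{prop:tri_A} and Theorem~\ref{theo_linear}; in essence it only repackages the linear approximant guaranteed by Theorem~\ref{theo_linear} into the matrix form used throughout the paper. First I would start from the unified optimal expression: by Proposition~\ref{prop:tri_A}, the optimal generative factors admit the form $G_i=Q_i(B_i)$ with $B=\hat A*\mathrm{diag}(N)$, where $\hat A$ is a permutation of a lower-triangular matrix $A$ and $N=\{N_1,\dots,N_K\}$ are independent Normal variables. Reading off rows, $B_i=(\hat A_{i1}N_1,\dots,\hat A_{iK}N_K)$, so $B_i$ carries exactly the sparsity pattern of the $i$-th row of $\hat A$ (i.e.\ the DAG ancestry of the $i$-th concept together with its own noise).

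Next I would invoke Theorem~\ref{theo_linear}: for each $i$ there is a linear function $Q'_i$ with $|Q_i(B_i)-Q'_i(B_i)|\le O_i(\delta)$ on the stated interval for $N_i$, so that $Q'_i(B_i)$ is an admissible approximation of the optimal $Q_i(B_i)$ (and hence, via Theorem~\ref{P1}, of $Z^*_i$). Writing $Q'_i(B_i)=\langle w_i,B_i\rangle$ and absorbing any affine offset (the $a_i$ term in the bound) into an extra constant coordinate, so that $N'$ denotes $N$ possibly augmented by that coordinate, we get $G'_i:=Q'_i(B_i)=\sum_j w_{ij}\hat A_{ij}N'_j$. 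Setting $\Tilde A_{ij}:=w_{ij}\hat A_{ij}$ gives $G'_i=\Tilde A_i*N'$; and since row-wise reweighting by the $w_{ij}$ cannot introduce new nonzero entries, $\Tilde A=W\odot\hat A$ inherits the support of $\hat A$, so applying the inverse of the permutation of Proposition~\ref{prop:tri_A} exhibits $\Tilde A$ as itself a permuted lower-triangular matrix. This is exactly the claimed form.

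The main obstacle is the bookkeeping around the permutation and the sparsity rather than any substantive estimate. The point to pin down is that the linear map from Theorem~\ref{theo_linear} is applied to $B_i$ (the row already masked by $\hat A$), not to all of $N$, so the resulting $\Tilde A$ retains the DAG-consistent structure; and that the single permutation $\pi$ witnessing ``$\hat A$ is permuted lower-triangular'' simultaneously witnesses it for $\Tilde A$, because the Hadamard reweighting $\Tilde A=W\odot\hat A$ commutes with simultaneous row/column permutation. A secondary, minor point is fixing the convention for $N'$: either $N'=N$ when $Q'_i$ is taken strictly linear, or $N'$ is $N$ extended by a constant coordinate to carry the affine offset of Theorem~\ref{theo_linear}; I would state this convention up front so that the displayed identity $G'_i=\Tilde A_i*N'$ holds literally rather than up to an additive constant.
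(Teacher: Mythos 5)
Your proposal is correct and follows essentially the same route as the paper's own proof: read off the $i$-th row of $\hat{A}*\mathrm{diag}(N)$, which carries the sparsity pattern of $\hat{A}$, apply the linear $Q'_i$ supplied by Theorem~\ref{theo_linear}, and collect the resulting coefficients into $\Tilde{A}_i$, which therefore has the same non-zero positions as $\hat{A}_i$ and so is again a permuted lower-triangular row. Your handling of the affine offset and of the permutation/Hadamard bookkeeping is, if anything, slightly more explicit than the paper's.
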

The proof is given in the Appendix~\ref{App4}.
We set our causal disentanglement in the context of the Structural Causal Model (SCM) and focus on a linear SCM because of its simplicity.
Following this, we formalize the causal structure in $G$ as follows,
\begin{equation}
~\label{eq:lscm_G}
    \begin{aligned}
        G'=\Phi G'+ \varepsilon \rightarrow G'= (I-\Phi)^{-1}\varepsilon^T
    \end{aligned}
\end{equation}
where $\Phi\in \mathbb{R}^{K \times K}$  is a {DAG adjacency matrix} and $\varepsilon$ is a independent variable.
The resulting $ (I-\Phi)^{-1}$ is also a {permuted} low triangular matrix, see the proof in Appendix~\ref{Proof3}. 
Note that Proposition~\ref{def:general_factor} is for a general causal setting. 
Letting $N= \bm{\varepsilon}$, the above linear representation based on SCM shares the same expression as that in the proposition, and thus inherits the ideal property of approximating the optimal latent factor $Z^*$. 
Furthermore, the two variables in Eq.~\ref{eq:lscm_G} are learned from data in a straightforward manner, without any labels for supervision.
Denoted each $G_i$ as a concept \citep{kumar2017variational}
we arrive at an unsupervised causal disentanglement that does not require any concept labels, called concept-free causal disentanglement.






{
In unsupervised disentanglement learning, along with the linear Gaussian assumption, the identifiability problem~\citep{locatello2019challenging} often arises due to the discrepancy between the pre-defined concepts and the learned ones.}
Without supervision, we  cannot achieve these pre-defined concepts, especially given limited data.
However, as we will see in the following theorem, these pre-defined concepts are attainable when sufficient data is accessed.
Since these concepts hold in multiple samples, making them the ground truth.




\begin{theo}
~\label{theo:truth}
Given $n$ observations $\{X^{(1)},X^{(2)},...,X^{(n)}\}$ sampled from the same distribution $p^*(X)$, along with their corresponding optimal generative factors  $\{Z^{(1)},Z^{(2)},...,Z^{(n)}\}$, 
the function of these generative factors will converge to the same ground truth (GT) concept.

\end{theo}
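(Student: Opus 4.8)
\textbf{Proof proposal for Theorem~\ref{theo:truth}.}

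The plan is to show that each observation $X^{(j)}$, being drawn from the common distribution $p^*(X)$, must be generated through the same underlying disentangled causal process, so the functional form relating generative factors to the data is shared across samples; the only thing that varies is the particular realization of the independent exogenous noise. First I would invoke Theorem~\ref{P1} and Proposition~\ref{prop:tri_A}: for the fixed distribution $p^*(X)$ there exists an optimal causality-modeling function $Q$ (equivalently a permuted lower-triangular mixing matrix $\hat A$ acting on $\mathrm{diag}(N)$) such that $G = Q(N)$ recovers the optimal latent factor $Z^*$ while preserving $p_{\theta_{\zeta_x}}(X) = p^*(X)$. Since all $X^{(j)}$ share the same $p^*(X)$, the same $Q$ (up to the permutation/indeterminacy already acknowledged in Proposition~\ref{prop:tri_A}) serves each of them; writing $Z^{(j)} = Q(N^{(j)})$ with $N^{(j)} \sim \mathcal N(0,I)$ i.i.d.\ exhibits the generative factors of every sample as images of one fixed map. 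This $Q$ — together with the structural equations~(\ref{C})--(\ref{X}) and the fixed predefined functions $q_i, g$ of the DCP — is precisely what I would designate as the ground-truth (GT) concept.

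The second step is the convergence claim. Here I would argue that any estimator of the concept built from a single sample (or a finite batch) is underdetermined: the indeterminacies catalogued before Theorem~\ref{theo:truth} (permutation of coordinates, and the $O_i(\delta)$-scale slack from the linear approximation in Theorem~\ref{theo_linear}) mean a per-sample fit $\hat Q^{(j)}$ need not equal $Q$. But as $n$ grows, the empirical distribution $\frac1n\sum_j \delta_{X^{(j)}}$ converges weakly to $p^*(X)$, so the constraint ``reproduce the data distribution'' tightens to the population constraint that pins down $Q$ in Theorem~\ref{P1}. Concretely I would show that the set of admissible concept functions consistent with $n$ samples is nested and shrinks, and its intersection over all $n$ is exactly the equivalence class of $Q$; hence the sequence of fitted concept functions converges to the GT concept. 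The $95\%$-confidence near-constant error bound of Theorem~\ref{theo_linear} is what guarantees this limiting class is a single point rather than a fat set: the approximation error does not accumulate as more samples are folded in.

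The main obstacle I anticipate is making ``the function of these generative factors will converge'' precise — the theorem statement is informal about \emph{which} function, \emph{which} topology of convergence, and whether convergence is deterministic, in probability, or almost sure. I would pin this down by defining the concept function as the mixing/structural map modulo the stated indeterminacies, equip the space of such maps with, say, the sup-norm on the compact support $[\mu_i-\delta,\mu_i+\delta]$ used in Theorem~\ref{theo_linear}, and prove convergence in probability via a uniform law of large numbers argument on the reconstruction objective. A secondary subtlety is ruling out that two genuinely distinct concept functions both reproduce $p^*(X)$ on all finite samples; this is the identifiability gap flagged in the paragraph preceding the theorem, and I would close it by appealing to the uniqueness (up to permutation) already secured by Proposition~\ref{prop:tri_A} for the triangular normal form, so that agreement on the full distribution forces agreement of the maps.
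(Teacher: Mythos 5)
Your reading of the theorem does not match the way the paper itself formalizes it, and the two proofs end up being about different objects. In Appendix~\ref{App7} the paper pins down ``the function of these generative factors'' as nothing more than the sample average $\bar{Z^{(n)}}=(Z^{(1)}+\cdots+Z^{(n)})/n$, declares the GT concept to be its limit $\ddot{Z}$, and disposes of the claim in one line by the law of large numbers. You instead interpret the ``function'' as the structural/mixing map $Q$ itself and try to prove that the \emph{fitted concept function} converges to the true one as the empirical distribution converges to $p^*(X)$. That is a strictly stronger and more interesting statement (it is essentially a consistency-and-identifiability result), but it is not what the paper proves, and the GT concept in your version is a map while in the paper's version it is a vector (a population mean).

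Taken on its own terms, your sketch also has genuine gaps. The step ``the set of admissible concept functions consistent with $n$ samples is nested and shrinks, and its intersection is exactly the equivalence class of $Q$'' is asserted, not established; making it work requires an identifiability result that the paper does not supply. Proposition~\ref{prop:tri_A} gives only a \emph{necessary} condition on the form of the optimal factors (the paper says so explicitly just after stating it), so it cannot be used to ``close the identifiability gap'' and force uniqueness up to permutation. Likewise, Theorem~\ref{theo_linear} bounds the error of a linear surrogate for a \emph{fixed} optimal $Q_i$ on a compact interval; it says nothing about whether two distinct maps could both reproduce $p^*(X)$, so it cannot guarantee that your limiting class ``is a single point rather than a fat set.'' The uniform law of large numbers argument on the reconstruction objective is likewise only named, not carried out. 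If you adopt the paper's modest formalization, all of this machinery is unnecessary and the proof is immediate; if you insist on the stronger map-level statement, you need a genuine identifiability theorem that neither your sketch nor the paper currently contains.
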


A formal version of
Theorem~\ref{theo:truth} and its proof can be found at Appendix~\ref{App7}. 
More importantly, we believe concepts obtained by the theorem are better than human-labeled concepts because these are limited and may involve bias.
The above discrepancy does not always imply errors in the learned concepts, and conversely, the latter can be a compensation for human-defined ones.

Besides,
Theorem~\ref{theo:truth}  enables guaranteed learning toward the ground truth (GT) concepts and leads to the following property:
\begin{Properties}
\label{prop:consistency1}
  \item (Consistency of generative factors). 
  Given observations sampled from the same distribution, each sample's optimal generative factors, i.e., $Z^*$, capture a portion of   GT concepts, 
  implying that one can approximate the GT concepts with a merging of $Z^*$, where we call the merged one an approximated concept.
\end{Properties}


%

The consistency property implies that concepts learned from individual samples capture the GT concepts shared by all data from the same distribution, making these concepts adaptable.
Therefore, under the same distribution, transferring concepts from observed data to newly sampled data benefits the learning of new data, thus significantly reducing the data demand and avoiding training from scratch.


\eat{
In unsupervised disentanglement learning, the identifiability problem~\citep{locatello2019challenging}  often arises due to the discrepancy between pre-defined concepts and learned ones.
We argue this may not be a significant concern because human-defined concepts are limited and may involve bias.
The above discrepancy does not always imply errors in the learned concepts, and conversely, the latter can be a compensation for human-defined ones, as Theorem~\ref{theo:truth} shows learned concepts will converge to the ground truth.
}


\eat{
The merits of concept-free causal disentanglement are at least two.
First, in contrast to supervised causal disentanglement, our concept-free solution does not require any labels for concepts but attains the latent ones directly from the data.
The concept-free disentanglement is of particular importance since well-defined concepts are often not accessible, and learnable latent concepts may be alternatives~\citep{higgins2016beta}.
Second, as we will see in Theorem 2, concept-free causal disentanglement offers globally parameterizable concepts that are closer to the truth.
}



\section{Method}
\label{sec5}


In this section, we propose a novel VGAE with a causal disentanglement model, namely Concept-free Causal VGAE (CCVGAE), whose goal is to obtain optimal disentangled latent representations.
We also introduce a concept-free causal disentanglement framework in a meta-learning setting, called concept-free causal Meta-Graph (CC-Meta-Graph), to harness the property of concept consistency.
We begin by introducing the definition of \ourmeth as follows,

\begin{defi}
(CCVGAE). 
Given an input graph's adjacency matrix $Adj$ and node attributes $Attri$,
the proposed CCVGAE is defined by:
\begin{itemize}
\item  A prior data distribution $p^{*}(Z^*)\neq \prod_{i}p(Z^*_i)$ roots on a set of causal structured latent factors $Z^*=\{Z^*_1, Z^*_2,...Z^*_K \}$.

\item An encoder  is composed of a GNN-based compression component and a causal disentanglement component.
The former employs GNN to compress the adjacency matrix $Adj$ and node attributes $Attri$ 
into a low-dimensional latent space as $\epsilon$.
The latter (parameterized by $\phi$) performs our concept-free causal disentanglement with $\epsilon$ as input, 
optimizes the underlying causal structure $\Phi$ in the learning procedure,
and outputs the posterior approximation parameters: $q_\phi({G'} \mid {\epsilon}, \Phi)$ ( see Eq.~\ref{eq:lscm_G}).

\item A decoder $p_\psi(Adj \mid G')$ that takes the obtained latent factor $Z$ to infer the adjacency matrix of the input graph data and is parameterized by $\psi$, i.e., $p\left(Adj_{i j}=1 \mid {G}_i, {G}_j\right)=\sigma\left({G}_i^{\top} {G}_j\right)$, where  $\sigma(\cdot)$ is the logistic sigmoid function.
\end{itemize}
\end{defi}



{\bf Optimization Objective.}
The optimization of \ourmeth is to encourage an equivalence between the approximated distribution $p_{\theta_{G}}(X)$ and the optimal one $p^{*}(X)$.
In particular, the evidence lower bound (ELBO) is used to minimize the divergence between the above two distributions, and to enforce that the distribution of  $\varepsilon$  is independent Gaussian, as follows,
\begin{equation}
\begin{aligned}
\footnotesize
&\mathcal{L}_G=E_{q(\hat{G}|Adj,Attri)}log(p(Adj|\hat{G}))+\\
& KL(q(\bm{\varepsilon}|(Adj,Attri))|N(0,I)).
\end{aligned}
\end{equation}
Apart from minimizing distribution divergences, we also want to shorten the distance between the observation and the recovered one by measuring the mean squared error (MSE), written as:
$\mathcal{L}_{MSE} = MSE(Attri,p(Attri|\hat{G}))$.
 
Meanwhile, performing causal structure modeling demands a DAG constraint on $\Phi$. For the convenience of optimization, we impose a differentiable constraint function~\citep{yu2019dag} as: $\mathcal{L}_{\Phi} = tr((I-\frac{r}{K}\Phi\circ \Phi)^K)-K$,
where $r$ is an arbitrary positive number, $tr(\cdot)$ denotes trace norm and $K$ denotes the number of concepts. 
Combining the above loss functions, we derive the overall loss function as follows,
 \begin{equation}
\begin{aligned}
\mathcal{L}=-\mathcal{L}_G+\alpha \mathcal{L}_{\Phi}+\beta \mathcal{L}_{MSE},
\end{aligned}
\end{equation}
where $\alpha$ and $\beta$ are hyper-parameters.
\textit{The overall algorithm is in Appendix~\ref{App6}.}

\subsection{Concept-free Causal disentanglement Meta-graph}

Meta-Graph~\citep{bose2019meta} deals with the few-shot link prediction task: it aims to predict links on target graphs ($\mathcal{G}_{T}$) with a model trained on a few source graphs ($\mathcal{G}_{S}$), where the source and target graphs are drawn from the same domain.
Denoted the distribution over graphs in the same domain as $p(\mathcal{G})$, the distributions of the source and target graphs follow the same, i.e., $\mathcal{G}_{S} \sim p(\mathcal{G})$ and $\mathcal{G}_{T} \sim p(\mathcal{G})$. 
To accomplish this task, we demand high-quality adaptation that transfers the information in the training data to newly arrived data.

According to Property 1, our concept-free causal disentanglement can provide fast adaptation and hence is well suited for a meta-learning setting.
Meta-Graph employs traditional VGAE to capture information to supply an initialization for training a subsequent link prediction model.
Thanks to the consistency property, our proposed disentanglement solution can capture information (i.e., concepts) that is adaptable to newly arrived data.
To this end, we replace VGAE with \ourmeth and let the other components remain in the Meta-Graph, called CC-Meta-Graph. 
\textit{We present the corresponding algorithm in Appendix~\ref{App6}.}



	\section{Experiments}

\eat{
We conduct experiments in this section to test and understand the effectiveness of concept-free causal VGAE.
Specifically, we aim to answer the following research questions under traditional VGAEs' setting and meta-learning setting:\\
\todo{REWRITE following questions}
}
\eat{
Q1) How does our model perform when compared with state-of-the-art link prediction baselines in real world data sets?\\
Q2) Does DAG constrain (\ref{DAG constrain}) improve the accuracy of link prediction?\\ 
Q3) How does our model perform when data sets has correlation (even causality) as described in (\ref{p*}) compared with the state-of-the-art.\\
Q4) Does our model get a representation which is more suitable to transfer than state-of-the-art baselines, for example, the representation learned in some graphs can provide better initialization representation for training in other graphs?
}


\eat{
{\bf Datasets.}
We  experiment on $6$ graph benchmark datasets from various domains~\citep{pei2020geom,tang2009social}, including Cora, dRisk, Actor, Corn, Texas, and Wisconsin.\textcolor{blue}{Cora and Actor have no citation?}
\todo{citation for cora....}
Table~\ref{T1} presents the statistics of these datasets, including the numbers of nodes, edges, and node attributes. 

\begin{table}[ht]
\centering
\label{T1}
\caption{ Statistics of datasets in our experiments.
N, E,  and Attr denote the number of “Nodes”, “Edges”, and “Node Attributes” , respectively.
Note the initial number of edges for the synthetic data is $4894$. }
\begin{tabular}{c|ccc}
\hline
Dataset   & N &  E &  Attr  \\ \hline
Cora      & 2708  & 5429  & 1433     \\
Corn      & 183   & 295   & 1703     \\
Texas     & 183   & 309   & 1703     \\
Wisconsin & 251   & 499   & 1703     \\
dRisk     & 100   & 478   & 4        \\
Actor     & 7600  & 33544 & 931      \\ 
Synthetic     & 100  & 4984^*  & 16      \\ \hline
\end{tabular}
\end{table}

Note that dRisk is a data set transformed from dRiskKB~\citep{xu2014driskkb}, constructed from the biological text.
dRiskKB contains $12981$ nodes representing disease names, with weighted edges indicating correlations between disease pairs.
To simplify the dataset, we randomly select 100 nodes from dRiskKB and transfer the weighted edges to the non-weighted edges.
The dRiskKB does not provide node attributes, so we randomly generate $4$ dimensions of one-hot features as node attributes.

\eat{
Note that dRisk is a data set transformed from dRiskKB \citep{xu2014driskkb}, which is constructed from biological text. In dRiskKB, there are 12981 nodes representing disease names, and weighted edges between nodes indicating the correlation between disease pairs. To simplify data set, we randomly select 100 nodes from dRiskKB and transfer weighted edges to non-weight edges. The dRiskKB does not provide the features of nodes, so we randomly generate 4 dimensions one-hot features as node features.
}

Considering that real-world datasets often have unknown causality, we thus construct synthetic data with controllable causality to answer the Q3.
In particular, we produce attributes of nodes, $X$, and the adjacency matrix, $A\in  \mathbb{R}^{100\times 100}$, as follows: $A=\sigma(Z\cdot Z^T)$ and $X=20Sin(Z)$. 
Here, we produce $Z$ using linear SCM to ensure its causality, Mathematically, we derive $Z = C^T Z + \varepsilon \rightarrow Z=(I-C^T)^{-1}\varepsilon$, where $C\in  \mathbb{R}^{16\times 16}$ is a random lower triangular matrix and $\varepsilon \in \mathbb{R}^{16}$ is an independent random vector with same variance normal distribution.
} 
 \eat{
 \begin{equation}
\begin{aligned}
A=sigmoid(Z\cdot Z^T)
\end{aligned}
\end{equation}
 \begin{equation}
\begin{aligned}
X=Sin(Z) \cdot 20
\end{aligned}
\end{equation}

 \begin{equation}
\begin{aligned}
\label{generate Z}
Z = C^T Z + \varepsilon \rightarrow Z=(I-C^T)^{-1}\varepsilon
\end{aligned}
\end{equation}

We set $C$ as $16 \times 16$ dimensional random lower triangular matrix. $\varepsilon$ is a 16 dimensional independent random vector with the same variance normal distribution. Set number of nodes as 100.
}

\eat{
{\bf Metrics.}
To evaluate our method, we perform the link prediction task and thus take two commonly used metrics in this area~\citep{kipf2016variational}: Area Under ROC Curve (AUC) and Average Precision (AP) scores.
All the experiment results are averaged over $3$ random seeds.
} 
\subsection{Task $1$: Link Prediction}

This experiment aims to study how the proposed method, \ourmeth, performs on the link prediction task when compared to state-of-the-art methods.

{\bf Datasets.}
We  experiment on $6$ graph benchmark datasets from various domains~\citep{sen2008collective, pei2020geom,tang2009social}, including Cora, dRisk, Actor, Corn, Texas, and Wisconsin.
Table~\ref{T1} presents the statistics of these datasets, including the numbers of nodes, edges, and node attributes. 

\begin{table}[ht]
\centering
\label{T1}
\caption{ Statistics of datasets in our experiments. 
{Note the initial number of edges for the synthetic data is $4894$.} $\# $ demotes number of. }
\begin{tabular}{c|ccc}
\hline
Dataset   &  $\# $ Node &  $\# $ Edge & $\# $ Attr  \\ \hline
Cora      & 2708  & 5429  & 1433     \\
Corn      & 183   & 295   & 1703     \\
Texas     & 183   & 309   & 1703     \\
Wisconsin & 251   & 499   & 1703     \\
dRisk     & 100   & 478   & 4        \\
Actor     & 7600  & 33544 & 931      \\ 
Synthetic     & 100  & $4984^*$  & 16      \\ \hline
\end{tabular}
\end{table}

\begin{table*}[!th]
\centering
\caption{
AUC ($\%$) and AP ($\%$)  scores for all  baselines on real-world datasets. 
Note that X-DGAE shows the best results among  all variations of DGAE, including 6-DGAE$_{\alpha}^{\beta}$, 36-DGAE$_{\alpha}^{\beta}$, and 64-DGAE$_{\alpha}^{\beta}$.  $*$ denotes results from the original article.} 
\footnotesize
 \scalebox{0.9}{
   \begin{tabular}{l|cccccccccc}
    \toprule
      \multirow{2}{*}{} &
      \multicolumn{2}{c}{GVAE} &
      \multicolumn{2}{c}{SIG-VAE} &
      \multicolumn{2}{c}{X-DGAE} &
      \multicolumn{2}{c}{\ourmeth} &
      \multicolumn{2}{c}{ \ourmeth w/o CC}
      \\
      ~ & 
      {AUC} & {AP} & {AUC} & {AP} & {AUC} & {AP} & {AUC} & {AP} & {AUC} & {AP}\\
      \midrule
        Cora 
        & 0.91${_{\pm0.02}}$ & 0.92${_{\pm0.01}}$ &  0.92 ${_{\pm0.01}}$ & {\bf 0.93 ${_{\pm0.02}}$} & {\bf0.93${_{\pm 0.02}}$} & 0.92${_{\pm 0.02}}$ & 0.85${_{\pm0.03}}$ & 0.85${_{\pm0.05}}$ & 0.72${_{\pm0.04}}$ & 0.73${_{\pm0.03}}$\\
        Corn 
        & 0.53${_{\pm0.03}}$ & 0.66${_{\pm0.06}}$   & 0.62${_{\pm0.05}}$ & 0.64${_{\pm0.03}}$ & 0.73${_{\pm 0.10}}$ & 0.77${_{\pm 0.10}}$ & {\bf 0.74 ${_{\pm0.06}}$} &{\bf 0.78 ${_{\pm0.04}}$} & 0.68${_{\pm0.06}}$ & 0.73${_{\pm0.05}}$\\
        Texas     & 0.51${_{\pm0.06}}$ & 0.59${_{\pm0.04}}$ & 0.60${_{\pm0.03}}$ & 0.63${_{\pm0.05}}$ & 0.46${^*_{\pm 0.09}}$ & 0.61${^*_{\pm 0.08}}$ & {\bf 0.75}${_{\pm0.07}}$ & {\bf 0.80}${_{\pm0.07}}$ & 0.74${_{\pm0.05}}$ & 0.75${_{\pm0.06}}$ \\
        Wisconsin  & 0.57${_{\pm0.04}}$ & 0.68${_{\pm0.04}}$  & 0.68${_{\pm0.05}}$ & 0.69${_{\pm0.06}}$ & 0.54${^*_{\pm 0.09}}$ & 0.67${^*_{\pm 0.09}}$ & {\bf 0.75}${_{\pm0.04}}$ & {\bf 0.79}${_{\pm0.05}}$  & 0.68${_{\pm0.04}}$ & 0.69${_{\pm0.04}}$ \\
        dRisk & 0.61${_{\pm0.03}}$ & 0.62${_{\pm0.05}}$  & 0.58${_{\pm0.03}}$ & 0.56${_{\pm0.04}}$ & 0.73${_{\pm 0.11}}$ & 0.72${_{\pm 0.10}}$  & {\bf 0.75}${_{\pm0.06}}$ & {\bf 0.72}${_{\pm0.05}}$ & 0.63${_{\pm0.05}}$ & 0.62${_{\pm0.06}}$\\ 
        Actor      & 0.76${_{\pm0.07}}$ &0.81${_{\pm0.06}}$ & 0.77${_{\pm0.03}}$ & 0.80${_{\pm0.05}}$ & 0.77${_{\pm 0.02}}$ & 0.80${_{\pm 0.03}}$ & {\bf 0.78${_{\pm0.07}}$} & {\bf 0.81}${_{\pm0.06}}$ & 0.72${_{\pm0.03}}$& 0.76${_{\pm0.04}}$ \\
         \bottomrule
   \end{tabular}
     }
   \label{Table_main_comparison_1}
\end{table*}

\begin{table*}[!th]
\centering
\caption{
The performance of Meta-Graph-based baselines under different settings: varying number of meta-training loops and the requirement of meta-training data.
}  
   \label{Table_main_comparison_2}
\centering
\footnotesize
\scalebox{0.8}{
\begin{tabular}{c|cccccc|cccccc}
\toprule
                     & \multicolumn{6}{c|}{PPI}                                                                                                    & \multicolumn{6}{c}{FIRSTMM\_DB}                                                                                             \\
\multirow{2}{*}{loops} & \multicolumn{2}{c}{CC-Meta-Graph}       & \multicolumn{2}{c}{Meta-Graph}         & \multicolumn{2}{c|}{Rand-Meta-Graph}             & \multicolumn{2}{c}{CC-Meta-Graph}       & \multicolumn{2}{c}{Meta-Graph}         & \multicolumn{2}{c}{Rand-Meta-Graph}              \\

                             & $5\% $               & $10\%$               & $5\%$                & $10\%$               & $5\%$                & $10\%$               & $5\%$                & $10\%$               & $5\%$                & $10\%$               & $5\%$                & $10\%$               \\ 
                             \midrule
                             
10                           & {\bf 0.70}${_{\pm0.01}}$ & {\bf 0.76}${_{\pm0.01}}$ & 0.59${_{\pm0.02}}$ & 0.70${_{\pm0.01}}$ & 0.50${_{\pm0.01}}$ & 0.50${_{\pm0.00}}$ & {\bf 0.59}${_{\pm0.02}}$ & {\bf 0.61}${_{\pm0.01}}$ & 0.57${_{\pm0.01}}$ & 0.59${_{\pm0.01}}$ & 0.50${_{\pm0.01}}$ & 0.50${_{\pm0.01}}$ \\
30                           & {\bf 0.70}${_{\pm0.01}}$ & {\bf 0.77}${_{\pm0.01}}$ & 0.66${_{\pm0.01}}$ & 0.75${_{\pm0.02}}$ & 0.51${_{\pm0.00}}$ & 0.52${_{\pm0.01}}$ & {\bf 0.59}${_{\pm0.00}}$ & {\bf 0.61}${_{\pm0.00}}$ & 0.58${_{\pm0.01}}$ & 0.60${_{\pm0.00}}$ & 0.52${_{\pm0.01}}$ & 0.51${_{\pm0.00}}$ \\
50                           & {\bf 0.72}${_{\pm0.02}}$ & {\bf 0.77}${_{\pm0.00}}$ & 0.70${_{\pm0.01}}$ & 0.77${_{\pm0.01}}$ & 0.51${_{\pm0.01}}$ & 0.52${_{\pm0.01}}$ & {\bf 0.60}${_{\pm0.00}}$ & {\bf 0.62}${_{\pm0.02}}$ & 0.59${_{\pm0.01}}$ & 0.61${_{\pm0.01}}$ & 0.51${_{\pm0.02}}$ & 0.51${_{\pm0.00}}$ \\
70                           & {\bf 0.73}${_{\pm0.01}}$ & {\bf 0.77}${_{\pm0.00}}$ & 0.72${_{\pm0.01}}$ & 0.77${_{\pm0.01}}$ & 0.51${_{\pm0.00}}$ & 0.51${_{\pm0.00}}$ &{\bf  0.61}${_{\pm0.01}}$ & {\bf 0.62}${_{\pm0.01}}$ & 0.59${_{\pm0.01}}$ & 0.62${_{\pm0.00}}$ & 0.51${_{\pm0.00}}$ & 0.52${_{\pm0.01}}$ \\ 
\bottomrule
\end{tabular}
}
\end{table*}

Note that dRisk is a data set transformed from dRiskKB~\citep{xu2014driskkb}, constructed from the biological text.
dRiskKB contains $12981$ nodes representing disease names, with weighted edges indicating correlations between disease pairs.
To simplify the dataset, we randomly select $100$ nodes from dRiskKB and transfer the weighted edges to the non-weighted edges.
The dRiskKB does not provide node attributes, so we randomly generate $4$ dimensions of one-hot features as node attributes.

\eat{
Note that dRisk is a data set transformed from dRiskKB \citep{xu2014driskkb}, which is constructed from biological text. In dRiskKB, there are 12981 nodes representing disease names, and weighted edges between nodes indicating the correlation between disease pairs. To simplify data set, we randomly select 100 nodes from dRiskKB and transfer weighted edges to non-weight edges. The dRiskKB does not provide the features of nodes, so we randomly generate 4 dimensions one-hot features as node features.
}

Considering that real-world datasets often have unknown causality, we thus construct synthetic data with controllable causality.
In particular, we produce attributes of nodes, $X$, and the adjacency matrix, $Adj\in  \mathbb{R}^{100\times 100}$, as follows: $Adj=\sigma(Z\cdot Z^T)$ and $Attri=20Sin(Z)$. 
Here, we produce $Z$ using linear SCM to ensure its causality, Mathematically, we derive $Z = C^T Z + \varepsilon \rightarrow Z=(I-C^T)^{-1}\varepsilon$, where $C\in  \mathbb{R}^{16\times 16}$ is a random lower triangular matrix and $\varepsilon \in \mathbb{R}^{16}$ is an independent random vector with same variance normal distribution.

{\bf Baselines.}
We compare  \ourmeth to three prior methods: 
(1) VGAE~\citep{kipf2016variational}, which is the first graph-based VAEs;
(2) SIG-VAE~\citep{hasanzadeh2019semi}, which uses  a hierarchical variational framework for encoder and a Bernoulli-Poisson link decoder;
(3) DGAE~\citep{wu2022stabilizing} incorporates standard auto-encoders (AEs) into GAEs to enhance the ability of modeling structured information.

{\bf Metrics.}
To evaluate our method, we perform the link prediction task and thus take two commonly used metrics in this area~\citep{kipf2016variational}: Area Under ROC Curve (AUC) and Average Precision (AP) scores.
All the experiment results are averaged over $3$ seeds.

{\bf Implementation details.}
We train the proposed model for $200$ iterations using Adam.
As for the mean and variance, we use $32$-dimensional and $16$-dimensional GCN layers to implement, respectively.

{\bf Main results.} 
We benchmark all the methods across $6$ real-world datasets.
In Table~\ref{Table_main_comparison_1}, we observe that \ourmeth (ours) can reliably compete others with up to $29\%$ improvement regarding AUC and $19\%$ improvement regarding AP.
Recall that \ourmeth improves on VGAE by integrating a causal layer to encourage disentangled representations, suggesting that the significant improvement is due to the expressiveness of those disentangled representations. 
SIG-VAE improve the representation by imposing graph structure-aware distributions instead of independent Gaussian, which results in better performance than VGAE. 
DGAE enhances VGAE by deepening GCN layers resulting in a better result than VGAE, especially for non-Euclidean data.

Additionally, we find that the performance on the Cora dataset shows different trends than other datasets.
We hypothesize that such data could be generated under nearly independent factors, thus countering the validity of our assumption, i.e., $ p^{*}(X)=\int_{Z^*}p^{*}(X|{Z^*})p^{*}(Z^*)dZ^*$ with $p^{*}(Z^*)\neq \prod_{i}p(Z^*_i)$, and resulting in poor performance.


We also experiment on the synthetic data with a predefined causal structure and achieve advantages as before.
In Figure~\ref{fig:synthetic AUC}, we present performance for all methods by  varying the variance of $\varepsilon$ in a large range: from $10$ to $300$. 
Interestingly, we find that the performance varies little as the noise level increases, implying that these VAE-based methods are robust to noise as they capture the variance of the distribution well.
Together, the robustness of our model benefits from the modeling of causality and variances.

\eat{
\begin{table*}[ht]
\label{T3}
\caption{The AUC of different inner loops with baseline framework meta-graph (encoded by VGAE), the proposed model, and the framework which is encoded randomly.\todo{replot}}
\centering
\begin{tabular*}{0.7\linewidth}{c|ccc|ccc}
\hline
dataset     & \multicolumn{3}{c|}{PPI}                  & \multicolumn{3}{c}{FIRSTMM\_DB}           \\ \hline
inner\_loop & PM(Ours) & VGAE & Random      & PM(Ours) & VGAE & Random      \\ \hline
10          & \textbf{0.70}          & 0.59        & 0.50   & \textbf{0.65}          & 0.57        & 0.50   \\
30          & \textbf{0.70}          & 0.66        & 0.51   & \textbf{0.65}          & 0.58        & 0.52   \\
50          & \textbf{0.72}          & 0.70        & 0.51   & \textbf{0.65}          & 0.59        & 0.51   \\
70          & \textbf{0.73}          & 0.72        & 0.51   & \textbf{0.65}          & 0.59        & 0.51   \\  \hline
\end{tabular*}
\end{table*}
}

\eat{
\begin{table*}[]
\centering
\begin{tabular*}{0.7\linewidth}{c|ccc|ccc}
\hline
dataset     & \multicolumn{3}{c|}{PPI}             & \multicolumn{3}{c}{FIRSTMM\_DB}      \\ \hline
inner\_loop & PM(Ours) & VGAE & Random & PM(Ours) & VGAE & Random \\ \hline
10          & 0.76          & 0.70        &        &               & 0.59        &        \\
30          & 0.77          & 0.75        &        &               & 0.60        &        \\
50          & 0.77          & 0.77        &        &               & 0.61        &        \\
70          & 0.77          & 0.77        &        &               & 0.62        &        \\ \hline
\end{tabular*}
\end{table*}
}

\eat{
With the increase of variance of $\varepsilon$ in equation(\ref{generate Z}) from 10 to 500, the AUC of VGAE , SIG-VAE and \ourmeth in synthetic dataset are as Figure \ref{fig:synthetic AUC}.
}



\subsection{Task $2$: Few Shot Link Prediction}


In this experiment, we aim to demonstrate the effectiveness of the proposed CC-Meta-Graph.
As this is a meta-learning model, it consists of a meta-training phase followed by a testing phase, and its goal is to transfer knowledge from meta-training to the test phase.
We will investigate the performance of all methods regarding (1) the number of meta-training loops and (2) the meta-training data requirement because these are the keys to a meta-learning model's performance.



\eat{
To answer Q4, 
we have experiment on the task of few shot link prediction. This task is to predict missing edges across multiple graphs using only a small sample of known edges. 
The graph is divided into several sub-graphs, some of them are training data sets and others are testing ones. Before test data sets, we use representation in training data sets as initialization.

To figure out how is the initialization representation in Q4, we can observe model performance when inner-loop number K is small. We can infer that, the less the inner-loop number K (how many times to train each sub-graph) is, the model will be more affected by the initial parameters. Therefore, when inner-loop number K is small, the better performance can be owe to better initialization learned by other sub-graphs.
}

\textbf{Baselines.} 
Our experiment consists of three baselines corresponding to Meta-Graph~\citep{bose2019meta} modifications, which employ pre-trained VGAEs, pre-trained CC-VGAEs, and randomness for initialization, called Meta-Graph, CC-Meta-Graph and Rand-Meta-Graph, respectively. 
In particular, the first baselines two are pre-trained on training graphs 
and fine-tuned on test graphs.

\eat{
In meta-graph, VGAE was adopted as inference model. To compare the proposed model in few shot, we replace inference model with the proposed model named as causal-meta-graph. 
}

\begin{figure*} [!t] 
    \centering
 \subfigure [] 
    {
        \includegraphics[width=0.3\textwidth]{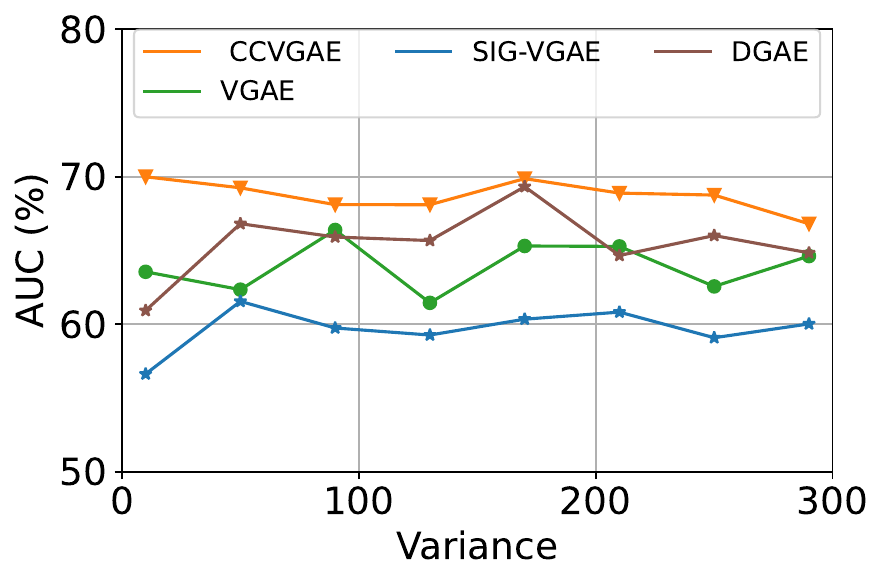}
         \label{fig:synthetic AUC}
    }
    \hspace{-0.1in}
        \subfigure []
    {
        \includegraphics[width=0.32\textwidth]{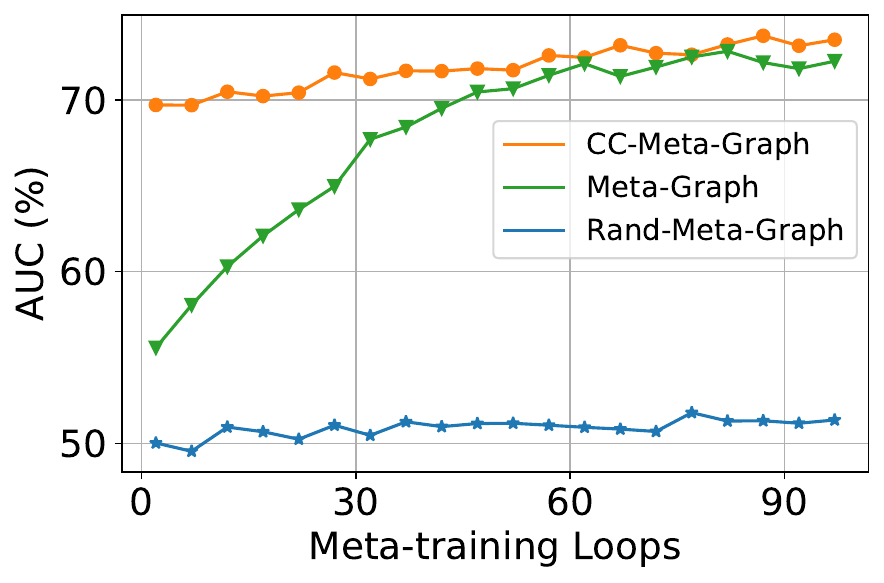}
         \label{fig:my_label}
    }
        \hspace{-0.1in}
        \subfigure []
    {
        \includegraphics[width=0.3\textwidth]{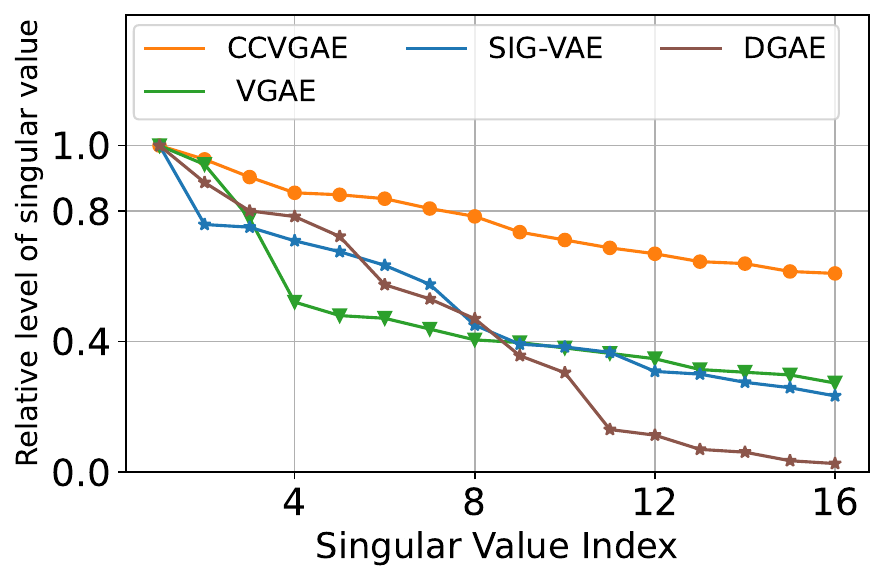}
         \label{fig:svd_disentangle}
    }
    \caption{(a): The comparison of all baselines of the few shot link prediction task on the synthetic data set. 
    The  x-axis denotes the variance of $\varepsilon$, which is used to construct the synthetic dataset. 
 The index of the maximum is $1$, the smaller the value and the larger the index. (b):  The performance of three methods when varying the number of meta-training loops (the PPI dataset).
 (c): The redundancy reduction analysis for causal disentangled representation. 
 We take SVD of representations and normalize the eigenvalues to make the maximum as $1$. The X-axis is the index of sorted normalized singular value, i.e., the first one denotes the largest value.
 }
\end{figure*}

{\bf Datasets.}
We experiment on two benchmark datasets~\citep{bose2019meta,zitnik2017predicting}, including  protein-protein interaction (PPI) and FirstMM DB. 
In this experiment, for all datasets, we perform link prediction by meta-training on a small subset of edges and then infer unseen edges.
Under all settings, we use $80\%$ of these graphs to pre-train weights and $10\%$ as meta-validation, optimizing the global model parameters, and the rest for meta-testing.
In terms of link prediction, we train all methods with two different settings: $5\%$ and $10\%$ edges of graphs, trying to see the effectiveness of using the data.
Apart from meta-training, we always use $20\%$ of edges for validation and the rest for testing. 





\eat{
Note that we only update causal layer after sub-graph training, because we can assume that the locally applicable causality may not be globally applicable, but the globally applicable causality must be locally applicable.
}


{\bf Main results.} 
In Table~\ref{Table_main_comparison_2}, we present the performance of all methods under different settings.
Our method, CC-Meta-Graph, outperforms others consistently, providing up to a $11\%$ absolute improvement.
Notably, we can see that with $5\%$ of the data, the performance of CC-Meta-Graph is competitive with the others given $10\%$, suggesting that our model can produce better generalizable representations with much less data and align with the consistency property.

We also evaluate how our model behaves under different meta-training epochs, as shown in Figure~\ref{fig:my_label}.
Our method shows near-optimal performance even with only a few loops as opposed to a few dozen loops for {Meta-Graph}.
Since Rand-Meta-Graph passes random values to the fine-tuning stage and thus can not benefit from the meta-training mechanism, resulting in the worst performance consistently.

To summarize, the superiority of our model validates the effectiveness of transferring global information to newly arrived data, even with significantly small data and only a few training loops, making our proposed method applicable under a limited budget.

\eat{
With the change of training loops in test dataset, the proposed model has a stably higher AUC than baseline. 

When the training loops is very small, the proposed model has higher AUC, which means that the proposed model can provide better initialization parameters in link prediction.
}

\subsection{Ablation Study}
{\bf Module Importance.} Recall that, for representation learning, we employ a causal structure to enforce disentanglement, which is DAG-structured $\Phi$ in Eq.~\ref{eq:lscm_G}.
Thereby, we investigate how our method performs without such a causality structure constraint, called \ourmeth w/o CC.
In Table~\ref{Table_main_comparison_1}, we find that the model without the DAG constraint , i.e., \ourmeth w/o CC, reduces the absolute performance by $12\%$  and $12\%$ regarding AUC and AP, respectively.
These ablation results suggest the necessity of causal structure in our model.

{\bf {The necessarily of $\mathcal{L}_{MSE}$}.} 
We investigate how our method performs without $\mathcal{L}_{MSE}$, called \ourmeth w/o MSE. 
In table~\ref{ablation_MSE}, we find \ourmeth and \ourmeth w/o MSE have similar performance (within 2$\%$ absolute gap) in Corn, Texas, Actor. In Cora, dRisk, Wisconsin, \ourmeth w/o MSE reduces the absolute performance by up to $5\%$  and $7\%$ regarding AUC and AP, respectively. These results suggest that $\mathcal{L}_{MSE}$ may slightly improve performance in some data, but not major.

\eat{
\begin{figure}
    \centering
    \includegraphics[width=0.4\textwidth]{pics/Fig_8.pdf}
    \caption{ 
    The comparison of all baselines of the few shot link prediction task on the synthetic data set. 
    The  x-axis denotes the variance of $\varepsilon$, which is used to construct the synthetic dataset. 
    }
    \label{fig:synthetic AUC}
\end{figure}

\begin{figure}
    \centering
    \includegraphics[width=0.45\textwidth]{pics/Fig_9.pdf}
    \caption{The performance of three methods when varying the number of meta-training loops (the PPI dataset).    
    }
    \label{fig:my_label}
\end{figure}
\begin{figure} 
    \centering
    \includegraphics[width=0.4\textwidth]{pics/Fig_4.pdf}
    \caption{ 
    The redundancy reduction analysis for causal disentangled representation. 
 We take SVD of representations and normalize the eigenvalues to make the maximum as $1$. The x-axis is the index of sorted normalized singular value.
 The index of the maximum is $1$, the smaller the value and the larger the index.
}
    \label{fig:svd_disentangle}
\end{figure}
}

     
    
    

\subsection{Analysis on the redundancy reduction}
We now present a redundancy reduction perspective to understand the effectiveness of our disentangled representations.
In particular, we apply the singular value decomposition (SVD) on the obtained representations from Texas dataset and compare the magnitudes of their eigenvalues, i.e., the importance of each eigenvector.
In Figure~\ref{fig:svd_disentangle}, we observe that the singular values of our method decrease slower, demonstrating that the importance of these eigenvectors is less concentrated.
This implies that our representations are less redundant, making them more expressive under low-dimensional settings.

\eat{
A fundamental property of disentanglement is the independence of the components of the latent variables. Yet this seems hard to visualize.
\textcolor{blue}{I can understand here we need a motivation. But I think it is a little risky to say like that. Yes, some articles in disentanglement paid attention to independence, we can cite them here to support this view. But these articles are based on such assumption that generate factors are independent. However, we do not believe it. It may lead to some questions here. I think it is a evaluation about information redundancy, which is a property about representation learning instead of disentanglement. Do you know some articles using SVD to make analysis? What did they say?}
We instead show orthogonality in representations since independent variables have to be uncorrelated, and uncorrelated and orthogonal are nearly interchangeable.\textcolor{blue}{Too risky!  "uncorrelated and orthogonal are nearly interchangeable", they are two different concepts and two different definitions, one is about vectors, the other is about random variables.}
In particular, we apply the singular value decomposition (SVD) on the obtained representations and compare the magnitudes of their eigenvalues, i.e., the strength of orthogonality.
In Figure~\ref{fig:svd_disentangle}, we observe that the singular values of our method are much larger than the others, and the baselines with similar performance share similar eigenvalues, aligned with our assumption that orthogonality promotes better performance.\textcolor{blue}{Are there other articles proved or tried to prove this assumption? This assumption is a little strong. It is too hard to make such conclusion just by one experiment on one data set. It will be better if there are citations to support.}
}

\eat{
\begin{figure}[!th]
    \centering
    \includegraphics[width=0.4\textwidth]{pics/Fig_4.pdf}
    \caption{ \footnotesize
    The redundancy reduction analysis for causal disentangled representation. 
 We take SVD of representations and normalize the eigenvalues to make the maximum as $1$. The x-axis is the index of sorted normalized singular value.
 The index of the maximum is $1$, the smaller the value and the larger the index.
}
    \label{fig:svd_disentangle}
\end{figure}
}

\eat{
\begin{figure}[!th]
    \centering
    \includegraphics[width=0.4\textwidth]{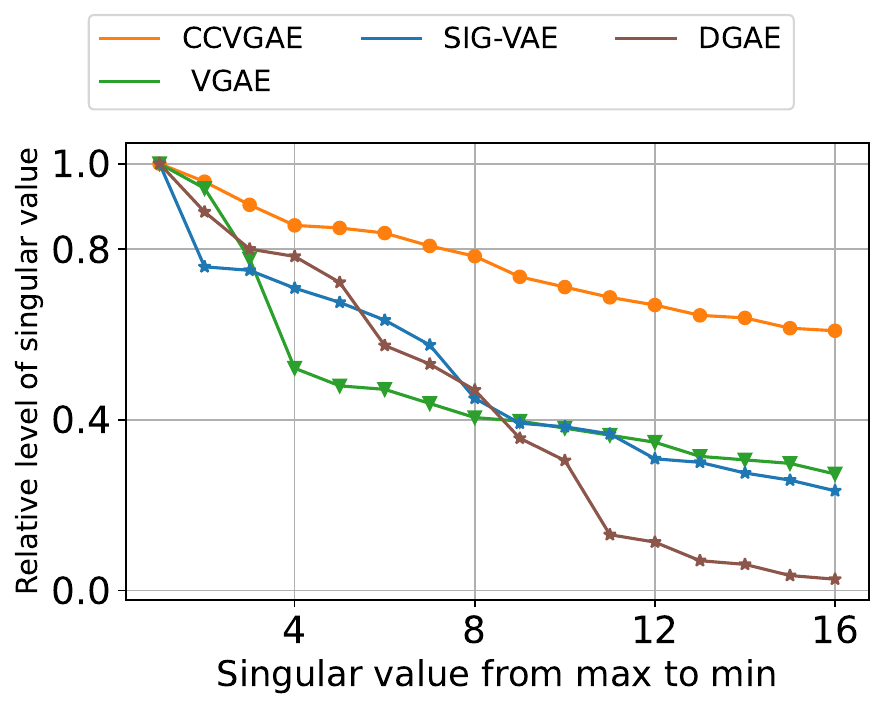}
    \caption{
    The redundancy reduction analysis for causal disentangled representation. 
 We take SVD of representations and normalize the eigenvalues to make the maximum as $1$. The horizontal axis is the index of sorted normalized singular value (the max value's index is 1, the less the value, the larger the index). 
}
\end{figure}
}
	\section{Conclusion}

In this paper, we provide a tight upper bound for the approximation of the optimal solution in the VAE framework, together with a practical solution, called Concept-free Causal Disentanglement.
We then propose an enhanced VGAE by a new causal disentanglement layer with the above idea, called \ourmeth. 
In addition, we discover the consistency of our derived concepts, which motivates us to develop a meta-learning model, called CC-Meta-Graph, aiming to transfer global information from limited data to new ones.
Our experimental results show the effectiveness of both models in the link prediction task and the few-shot one.

\eat{
(1) Our contribution is that we firstly implement causal disentanglement into graph representation learning, and the proposed model achieve competitive results in the dataset.
(2) The representation learned by the proposed model is more suitable to transfer, which can be applied when edges in graph are rarely known to improve the quality of initial parameters. In a small number of training loops, the proposed model has excellent performance in the link prediction.
}

	\nocite{langley00}
	
	\newpage
	\bibliography{reference}
	\bibliographystyle{icml2023}

	\newpage

\appendix
\section{Appendix}
\subsection{Proof of Theorem~\ref{P1}}
\label{App1}

Before detailing the proof, we first introduce two necessary lemmas.
\begin{lemm}
\label{L1}
$\forall$ K dimension continuous variable $X=(X_1,...,X_K)$, if the support set of its joint probability density is a convex set in $R^K$, then $\exists$ K dimensional independent uniform variables $U=(U_1,...,U_K)$ and a set of function $F=\{f_1,f_2,...,f_K\}$ result in $X=F(U)$ that $X_k=f_k(U_1,...,U_k)$.
\end{lemm}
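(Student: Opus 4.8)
The plan is to construct the functions $f_k$ explicitly through the conditional inverse-CDF (Knothe--Rosenblatt / Rosenblatt) transformation and then verify that the auxiliary variables it produces are i.i.d.\ uniform. First I would factor the joint density by the chain rule, $p(x_1,\dots,x_K)=p_1(x_1)\,p_{2\mid1}(x_2\mid x_1)\cdots p_{K\mid 1:K-1}(x_K\mid x_1,\dots,x_{K-1})$, which is legitimate because $X$ has a joint density and the convex (hence positively-measured in each section) support guarantees that the relevant marginal and conditional densities are well defined almost everywhere. For each $k$ let $F_k(\,\cdot\mid x_1,\dots,x_{k-1})$ be the conditional CDF of $X_k$ given $X_1=x_1,\dots,X_{k-1}=x_{k-1}$, and set $U_k := F_k(X_k\mid X_1,\dots,X_{k-1})$.

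Next I would invoke convexity exactly where invertibility is needed: a slice of a convex set obtained by fixing the first $k-1$ coordinates is again convex, hence an interval, so the conditional support of $X_k$ is connected; therefore $t\mapsto F_k(t\mid x_1,\dots,x_{k-1})$ is continuous and strictly increasing on that interval and admits a genuine inverse $F_k^{-1}(\,\cdot\mid x_1,\dots,x_{k-1})$. Define $f_1(u_1):=F_1^{-1}(u_1)$ and recursively $f_k(u_1,\dots,u_k):=F_k^{-1}\bigl(u_k\mid f_1(u_1),\dots,f_{k-1}(u_1,\dots,u_{k-1})\bigr)$. A short induction then gives $X_k=f_k(U_1,\dots,U_k)$, so $X=F(U)$ with the claimed triangular dependence structure.

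It remains to show $U=(U_1,\dots,U_K)$ is a vector of independent $\mathrm{Uniform}[0,1]$ variables, which I would do by the change-of-variables formula applied to the map $x\mapsto u$ with $u_k=F_k(x_k\mid x_1,\dots,x_{k-1})$. This map is triangular, so its Jacobian determinant equals the product of the diagonal entries $\partial u_k/\partial x_k = p_{k\mid 1:k-1}(x_k\mid x_1,\dots,x_{k-1})$, and this product telescopes to exactly $p(x_1,\dots,x_K)$. Hence the density of $U$ is $p(x)/\lvert\det J\rvert = p(x)/p(x) = 1$ on $[0,1]^K$, i.e.\ $U$ is uniform on the unit cube, equivalently the $U_k$ are independent and each uniform on $[0,1]$.

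The main obstacle I anticipate is measure-theoretic bookkeeping rather than a conceptual difficulty: ensuring the conditional densities and CDFs are defined on a full-measure set, handling the null set where a conditional density vanishes or $p$ fails to be differentiable, and justifying the change of variables over a possibly unbounded, non-rectangular convex support. The convexity hypothesis is precisely what keeps the ``conditional support is an interval'' step clean; in the write-up I would emphasize that all identities hold almost everywhere and that redefining each $f_k$ arbitrarily on the exceptional null set does not affect the distributional conclusion.
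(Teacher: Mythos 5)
Your proof is correct: the Knothe--Rosenblatt construction $U_k := F_k(X_k \mid X_1,\dots,X_{k-1})$ together with the triangular change-of-variables argument establishes exactly the claimed factorization $X_k=f_k(U_1,\dots,U_k)$, and you correctly isolate convexity as the hypothesis that makes each conditional support an interval so the conditional CDFs are strictly increasing and genuinely invertible. The paper does not prove this lemma itself --- it defers entirely to an external citation --- but the expression for $f_i$ it later quotes from that reference is precisely the conditional-distribution transform you build, so your argument is the standard one the paper implicitly relies on, now made self-contained.
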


The proof of Lemma~\ref{L1} is provided by~\citet{HePing}. 
The above lemma indicates that it is possible to represent continuous observations with convex joint density distributions by projecting a list of independent variables onto some functions.
{We now present the following lemma showing that any given continuous variable can be associated with a uniform distribution.}

\begin{lemm}
\label{L2}
$\forall$ continuous variable $X$, set its distribution function as $f(X)=P(X \leq x)$, then $f(X) \sim U(0,1)$. 
\end{lemm}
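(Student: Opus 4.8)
\textbf{Proof proposal for Lemma~\ref{L2}.}
The plan is to establish the classical \emph{probability integral transform}: if $Y := f(X)$ where $f$ is the cumulative distribution function of the continuous variable $X$, then $Y$ has distribution $U(0,1)$. It suffices to compute the distribution function of $Y$, namely $P(Y \le y)$ for $y \in [0,1]$, and show it equals $y$. For $y < 0$ and $y > 1$ the statements $P(Y \le y) = 0$ and $P(Y\le y) = 1$ are immediate since $f$ takes values in $[0,1]$, so the work is confined to $y \in [0,1]$.

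First I would record the two elementary properties of $f$ that drive the argument: $f$ is non-decreasing, and — because $X$ is a continuous random variable — $f$ is a continuous function with $\lim_{x\to-\infty} f(x) = 0$ and $\lim_{x\to+\infty} f(x) = 1$. Fix $y \in (0,1)$ and set $x_0 := \sup\{\, x : f(x) \le y \,\}$. By continuity and the limiting behaviour of $f$ this supremum is a finite real number, and by continuity $f(x_0) = y$. Next I would verify the key set identity $\{\, f(X) \le y \,\} = \{\, X \le x_0 \,\}$ as events: the inclusion $\{X \le x_0\}\subseteq\{f(X)\le y\}$ follows from monotonicity and $f(x_0)=y$; for the reverse inclusion, if $f(X) \le y$ then $X$ lies in the set whose supremum is $x_0$, hence $X \le x_0$. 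Therefore
\[
P(Y \le y) = P(f(X) \le y) = P(X \le x_0) = f(x_0) = y,
\]
and letting $y \to 0^+$ and $y \to 1^-$ handles the endpoints by continuity. This exhibits the distribution function of $Y$ as that of $U(0,1)$, proving the lemma.

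The main obstacle is the step where $f$ has flat stretches, i.e. $f$ is not strictly increasing, so that a naive appeal to "$f^{-1}$" is illegitimate. I would handle this exactly via the supremum definition of $x_0$ above (a generalized inverse), together with the observation that any interval on which $f$ is constant equal to $y$ is assigned probability zero by $X$ — so the ambiguity in choosing $x_0$ does not affect $P(X \le x_0)$. If one is willing to assume, as the surrounding development effectively does (convex support with a density), that $f$ is strictly increasing on the support of $X$, this subtlety disappears entirely and the identity $\{f(X)\le y\} = \{X \le f^{-1}(y)\}$ is exact; I would mention this simplification but give the argument in the general continuous case for completeness.
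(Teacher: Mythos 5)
Your proof is correct and follows essentially the same route as the paper's one-line argument, namely the standard probability integral transform computation $P(f(X)\le y)=P(X\le x_0)=f(x_0)=y$. The only difference is that you carefully handle the case where $f$ is not strictly increasing via the generalized inverse $x_0=\sup\{x: f(x)\le y\}$, whereas the paper simply writes $f^{-1}(a)$ and implicitly assumes invertibility; your version is the more rigorous rendering of the same idea.
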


\begin{proof}
$P(f(X) \leq a)=P(X \leq f^{-1}(a))=f(f^{-1}(u))=a$, where $a$ is a constant.
\end{proof}

Then we prove Theorem~\ref{P1}:
\begin{proof}
Generally, proving Theorem~\ref{P1} is equal to finding functions $q_i$, $i=1,2,...,K$ that make $G_i=q_i(N^S_i)=Z^*_i, N^S \subseteq N$ true.
We now present the proof in four steps as follows.

Step 1: Because of Lemma\ref{L1}, we get that there exists independent uniform variables $(U_1,...,U_K)$ and function $F$ make $Z^*=F(U)$, in which:
\begin{equation}
    U=(U_1,...,U_K)
\end{equation}
\begin{equation}
\label{fi}
    Z^*_i=f_i(U_1,...,U_i)
\end{equation}

Set $U_i \sim U(a_i,b_i)$, $i=1,2,...,K$.

Step 2: Set there are arbitrary K independent normal variables $N_1,...,N_K$, $N_i \sim N(\mu_i,\sigma^2_i$). Set distribution function of $N_i$ is $g_i$, which means $g_i(x)=P(N_i \leq x)$. Denote:
\begin{equation}
\label{U'i}
    g_i(N_i)=U'_i, i=1,2,...,K
\end{equation}

Step 3: Because of Lemma\ref{L2}, $U'_1,...,U'_K$ are independent and they all are variables of uniform distribution $U(0,1)$.

Then $U_i$ in Step 1 can be represents by $U'_i$ as:
\begin{equation}
    U_i=(b_i-a_i)U'_i+a_i, i=1,2,...,K
\end{equation}
Because $g_i(N_i)=U'_i$ in (\ref{U'i}), which means $U'_i$ is function of $N_i$, so we can denote:
\begin{equation}
\label{hi}
    (b_i-a_i)U'_i+a_i=(b_i-a_i)g_i(N_i)+a_i=h_i(N_i)
\end{equation}

Step 4: Then we can find $Z^*_i=q_i(N_1,...,N_i)$ as:
\begin{equation}
\label{qi}
    Z^*_i=f_i(h_1(N_1),h_2(N_2),...,h_i(N_i))\\
    =q_i(N_1,...,N_i)\\
\end{equation}
In which set of $f_i$ is from equation \ref{fi} and $h_i$ is from equation \ref{hi}. Therefore, we can induce Theorem~\ref{P1} is right because $\forall$ $Z^*$ (in equation $ p^{*}(X)=\int_{Z^*}p^{*}(X|Z^*)p^{*}(Z^*)dZ^*$), $\exists$ a function $Q$ makes $G=Q(N^S)=\{q_i(N_1,...,N_i)\}_{i=1,2,...,K}$ (from equation (\ref{qi})) is equal to $Z^*$ .

\end{proof}

\subsection{Proof of Proposition~\ref{prop:tri_A}}
\label{App2}
\begin{proof}
	${}$ \\
	Let $A$ as a lower triangular matrix, $\hat{A}$ is permuted from $A$. We can induce that $\hat{A}*{diag(N)}$ can be acquired by such matrix with finite row exchange :
	\[
	\begin{pmatrix}  
	A_{11}N_{1}&{0}&{0}&...&{0}\\
	A_{21}N_1&A_{22}N_2&0&...&0\\
	A_{31}N_1&A_{32}N_2&A_{33}N_3&...&0\\
	...\\
	A_{K1}N_1&A_{K2}N_2&A_{K3}N_3&...&A_{KK}N_K
	\end{pmatrix}
	\]
	Where $\hat{A}=\{A_{ij}\}$. So we can induce that for each $q_i$ in Equation~\ref{qi}, there must exist one raw of $\hat{A}*{diag(N)}$ equal to $(a_{i1}N_1,a_{i2}N_2,...,a_{ii}N_i)$, which is exactly $q_i^{-1}(Z^*_i)$. So optimal generative factors $G=Z*$ can be expressed as $\hat{A}*{diag(N)}$. %

\end{proof}

Now we provide a remark that implies any functions $Q'$ with the mentioned two conditions is guaranteed to meet the same mapping relationships with the true function $Q$, which makes $G'$ acquired by such $Q$ has the same statistical properties with the true $G$. 
\begin{remark}
	\label{conditions}
	Consider a function set
	$Q'=\{Q'_1,...Q'_K\}$ that computes  causal generative factors with a row-wise formulation as $G_i=Q'_i(B_i)$.
	Assuming $B  = A * {diag(N')}\in \mathbb{R}^{K \times K}$, along with the following conditions: \\
	Condition 1:  $A \in \mathbb{R}^{K \times K}$  can be obtained by finite row exchanges of a lower triangular matrix;\\
	Condition 2: $ N'$ consists of $K$ independent normal variables as $ N'=(N'_1,N'_2,...,N'_K)^T$;\\
	then \\
	(1) There must exist two real numbers denoted as $a_i$ and $b_i$, and exist $N_k \in N$ as in Theorem \ref{P1}, such that the following equation holds: $N'_i=a_i\times N_k+b_i, i=1,2,...,K$.\\
	(2) Denote $Q=\{Q_1,Q_2,...,Q_K\}$(in Theorem \ref{P1}) and $Q_i^{-1}(G_i)=N^{S}_{i} \subseteq  \{N_1,N_2,...,N_K\}$, then for each $Q_i$ there must exist $Q'_m \in Q'$  and a constant diagonal matrix $V_i$ and constant vector $M_i$ such that: 
	$Q_i^{-1}(G_i)=V_i{Q'_m}^{-1}(G_i)+M_i=N^{S}_{i}$ if $Q_i$  and $Q'_m$ are reversible. 
\end{remark}

\begin{proof}
	Proof of Remark~\ref{conditions} is equal to proving such 2 statements:
	
	(1) $\forall i$ in 1,2,...,$K$, there exist a constant diagonal matrix $V_i$ and constant vector $M_i$ that $N'^S_i=(N'_1,N'_2,...,N'_i)$ can be acquired by $N^S_i=(N_1,N_2,...N_i)$: $N'^S_i=V_iN^S_i+M_i$.
	
	(2) $Q=\{q_1,q_2,...,q_K\}$ in Equation~\ref{qi} is unique.

	Proof of statement(1):\\
	Because $\forall$ two normal variables with the same dimensions, denoting $N_i$ and $N'_i$, there exist two constant $v_i,m_i$ that $N'_i=v_iN_i+m_i$. Statement(2) just represents $N'_i=v_iN_i+m_i$ as random vector.
	
	Proof of statement(2):\\
	Because $f_i$ and $h_i$ is unique. So $q_i$ is unique and $Q$ is unique.
\end{proof}
The (1) in Theorem~\ref{conditions} can be acquired by statement(1). Because statement(2) tells us $Q$ is unique, so we can induce by Proposition~\ref{prop:tri_A} that Equation~\ref{qi} can also be acquired by $G_i=Q_i(B_i)=Q'_i(B_i)$ if $N=N'$. However, we can not ensure $N=N'$, so with statement(2), we can induce (2) in Theorem~\ref{conditions}.

\subsection{Proof of Theorem~\ref{theo_linear}}
\label{App3}
Before proof, we need a lemma as following:
\begin{lemm}
	\label{L4}
	Denote the distribution function of normal variable $N \sim (\mu,\sigma^2)$ as $F_N(x)$. There exist a linear function $f_L(x)$: (1) Make the absolute error $|F_N(x)-f_L(x)| \le \frac{1}{\sqrt{2\pi } } x(1-e^{-\frac{x^2}{2\sigma^2} })$
	(2) If $x \in \left [ \mu-\varepsilon, \mu+\varepsilon\right ]$, then the absolute error $|F_N(x)-f_L(x)| \le \frac{1}{\sqrt{2\pi } } \varepsilon(1-e^{-\frac{\varepsilon^2}{2\sigma^2} })$
\end{lemm}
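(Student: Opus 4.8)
\textbf{Proof proposal for Lemma~\ref{L4}.}

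The plan is to build the linear approximant $f_L$ explicitly as the tangent line (or an affine interpolant) of the normal CDF $F_N$ at the center $x=\mu$, and then control the error by the remainder term of a first-order Taylor expansion, where the second derivative of $F_N$ is the (signed) derivative of the Gaussian density. First I would recall that $F_N'(x) = \varphi_N(x) = \frac{1}{\sqrt{2\pi}\,\sigma}e^{-(x-\mu)^2/(2\sigma^2)}$ and $F_N''(x) = \varphi_N'(x) = -\frac{x-\mu}{\sqrt{2\pi}\,\sigma^3}e^{-(x-\mu)^2/(2\sigma^2)}$. Taking $f_L(x) := F_N(\mu) + \varphi_N(\mu)(x-\mu) = \tfrac12 + \frac{1}{\sqrt{2\pi}\,\sigma}(x-\mu)$, Taylor's theorem with the Lagrange (or integral) form of the remainder gives $F_N(x) - f_L(x) = \int_{\mu}^{x} (x-t)\,F_N''(t)\,dt$, so $|F_N(x)-f_L(x)| \le \int_{\mu}^{x} |x-t|\,|\varphi_N'(t)|\,dt$.

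Next I would bound the integrand. Substituting $u = t-\mu$, one gets $|F_N(x)-f_L(x)| \le \int_0^{|x-\mu|} (|x-\mu|-u)\,\frac{u}{\sqrt{2\pi}\,\sigma^3}e^{-u^2/(2\sigma^2)}\,du$, and the inner integral can be handled by noting $\frac{u}{\sigma^2}e^{-u^2/(2\sigma^2)} = -\frac{d}{du}e^{-u^2/(2\sigma^2)}$, which makes the antiderivatives elementary. Carrying out the integration by parts (or just integrating the two pieces separately) should collapse the bound to something of the shape $\frac{1}{\sqrt{2\pi}}\big(|x-\mu| - \sigma\sqrt{2\pi}(\Phi\text{-type term})\big)$ or, after the cruder estimate of dropping a nonnegative piece, to the claimed form $\frac{1}{\sqrt{2\pi}}\,|x-\mu|\big(1 - e^{-(x-\mu)^2/(2\sigma^2)}\big)$. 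Here I suspect the paper is implicitly shifting/normalizing so that ``$x$'' in the statement plays the role of the deviation $x-\mu$; I would state this normalization explicitly and then read off part (1) with $x \leftarrow x-\mu$. Part (2) is then immediate: if $x \in [\mu-\varepsilon,\mu+\varepsilon]$ then $|x-\mu|\le \varepsilon$, and since the map $s \mapsto s(1-e^{-s^2/(2\sigma^2)})$ is nondecreasing for $s\ge 0$, the bound in (1) is itself bounded by $\frac{1}{\sqrt{2\pi}}\,\varepsilon(1-e^{-\varepsilon^2/(2\sigma^2)})$.

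The main obstacle I anticipate is matching the exact constant and functional form claimed in the statement: the naive Taylor remainder gives an error like $\tfrac12 \sup|\varphi_N'|\cdot(x-\mu)^2$, which is $O((x-\mu)^2)$ near $\mu$, whereas the stated bound $\frac{1}{\sqrt{2\pi}}|x-\mu|(1-e^{-(x-\mu)^2/(2\sigma^2)})$ is also $O((x-\mu)^3/\sigma^2)$ near $\mu$ but only $O(|x-\mu|)$ for large deviations — so the two are consistent only if one is careful not to lose the cancellation. I would therefore avoid the crude $\sup$ bound and instead keep the integral $\int_0^{|x-\mu|}(|x-\mu|-u)\,u\,\sigma^{-3}(2\pi)^{-1/2}e^{-u^2/(2\sigma^2)}\,du$ intact, evaluate $\int_0^{a} u\,e^{-u^2/(2\sigma^2)}\,du = \sigma^2(1-e^{-a^2/(2\sigma^2)})$ exactly, and then bound the remaining $\int_0^a u^2(\cdots)du$ piece from below by $0$ (it has a favorable sign), which is exactly what produces the clean closed form. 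If instead the intended $f_L$ is a secant line through the endpoints $(\mu-\varepsilon, F_N(\mu-\varepsilon))$ and $(\mu+\varepsilon, F_N(\mu+\varepsilon))$, a parallel argument via the standard secant-interpolation error formula $F_N(x) - f_L(x) = \tfrac12 F_N''(\xi)(x-\mu+\varepsilon)(x-\mu-\varepsilon)$ works similarly; I would pick whichever of the two choices of $f_L$ makes the stated constant come out exactly, and present that one.
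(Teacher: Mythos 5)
Your proposal is correct and follows essentially the same route as the paper: both take $f_L$ to be the tangent line of $F_N$ at the mean, express the error as an integral (the paper via $\int_0^t\frac{1}{\sqrt{2\pi}}\bigl(1-e^{-x^2/(2\sigma^2)}\bigr)\,dx$, you via the equivalent second-order Taylor remainder, the two being related by one integration by parts), bound it to obtain the closed form, and deduce part (2) from monotonicity of $s\mapsto s\bigl(1-e^{-s^2/(2\sigma^2)}\bigr)$. Your more careful normalization in fact exposes that the paper's stated bound is missing a factor of $1/\sigma$ (its proof uses the unnormalized density $\frac{1}{\sqrt{2\pi}}e^{-x^2/(2\sigma^2)}$), but this does not affect the structure of the argument.
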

\begin{proof}
	The proof of (1):
	Without losing generalization, we consider $\mu=0$ and the linear function is $f_L(x)=\frac{1}{\sqrt 2\pi}x+\frac{1}{2}$. We only consider $x >0$, because both of $f_L(x)$ and distribution function are central symmetric about $(0,\frac{1}{2})$, so the absolute error is same when $x<0$.
	\begin{equation}
	\begin{split}
	& |F_N(t)-f_L(t)|\\
	& = (\frac{1}{\sqrt{2\pi } }t+\frac{1}{2 })-\int_{-\infty }^{t} \frac{1}{\sqrt{2\pi } } e^{-\frac{x^2}{2\sigma^2} }dx\\
	&=\frac{1}{\sqrt{2\pi } }t-\int_{0 }^{t} \frac{1}{\sqrt{2\pi } } e^{-\frac{x^2}{2\sigma^2} }dx\\
	& =\int_{0 }^{t}\frac{1}{\sqrt{2\pi } }dx-\int_{0 }^{t} \frac{1}{\sqrt{2\pi } } e^{-\frac{x^2}{2\sigma^2} }dx\\
	& =\int_{0 }^{t}\frac{1}{\sqrt{2\pi } }(1- e^{-\frac{x^2}{2\sigma^2} })dx\\
	& \le \int_{0 }^{t}\frac{1}{\sqrt{2\pi } }(1- e^{-\frac{t^2}{2\sigma^2} })dx\\
	& =\frac{1}{\sqrt{2\pi } }t(1- e^{-\frac{t^2}{2\sigma^2} })
	\end{split}
	\end{equation}

The proof of (2):
Without losing generalization, we consider $\mu=0$. Because $\frac{1}{\sqrt{2\pi } }t(1- e^{-\frac{t^2}{2\sigma^2} })$ is a increasing function. If $t \in \left [ 0, \varepsilon\right ]$, $\frac{1}{\sqrt{2\pi } }t(1- e^{-\frac{t^2}{2\sigma^2} })\le \frac{1}{\sqrt{2\pi } } \varepsilon(1-e^{-\frac{\varepsilon^2}{2\sigma^2} })$. The same as $t \in \left [ -\varepsilon, 0\right ]$.
\end{proof}

Now we prove Theorem~\ref{theo_linear}:
\begin{proof}
	Assume $Z^*_1,Z^*_2,...,Z^*_K$ is a linear uniform distribution means that:
	$P(Z^*_1)\sim U(\eta_1+r_{10},\eta_1+r_{11})$ and $P(Z^*_k|Z^*_1,...,Z^*_{k-1}) \sim U(\eta_k+r_{k0},\eta_k+r_{k1}),k=2,...,K$ with $\eta_k= \sum_{t=1}^{k-1} \omega _{kt}Z^*_t$, and $\omega _{k1},\omega _{k2},...,\omega _{k(k-1)},r_{k0},r_{k1}$ is real number. Then we can get that the conditional distribution when $Z^*_k$ is in its non-zero interval:
	\begin{equation}
	\label{cond_z}
	\begin{split}
	& P(Z^*_k\le z^*_k|Z^*_1=z^*_1,Z^*_2=z^*_2,...,Z^*_{k-1}=z^*_{k-1})\\
	& =\frac{1}{r_{k1}-r_{k0}}(z^*_k-r_{k0})+\omega _{k1}z^*_1+...+\omega _{k(k-1)}z^*_{k-1}
	\end{split}
	\end{equation}
	We can find that this conditional distribution is a linear function of $(z^*_1,...,z^*_k)$. Moreover, we can induce by Mathematical Induction that the joint distribution $P(Z^*_1,...,Z^*_k)=P(Z^*_k|Z^*_1,...,Z^*_{k-1})P(Z^*_1,...,Z^*_{k-1})$ is a constant when $(Z^*_1,...,Z^*_k)$ is in their non-zero interval.
	
	We can find the formulation of $f_i$ in Lemma~\ref{L1} in ~\citet{HePing} as:\\
	$f_i(x_1,x_2,...,x_k)=\frac{p_X(X_i\le x_i|X_1=x_1,X_2=x_2,...,X_{i-1}=x_{i-1})}{p_X(x_1,...,x_{k-1}
		)}$
	
	$p_X$ means the probability dense function of $X$ in Lemma~\ref{L1}, but in equation~\ref{fi}, $p_X$ means the dense function of $Z^*=\{Z^*_1,Z^*_2,...,Z^*_K\}$ in true data set. So we can induce from equation~\ref{cond_z} that, the numerator of $f_i(U_1,...,U_i)$ in equation~\ref{fi} is linear function of $U_1,...,U_i$ and denominator is a constant. Therefore, equation~\ref{fi} is a linear function of $U_1,...,U_i$, denoting as:
	\begin{equation}
	\label{f2a}
	\begin{split}
	& f_i(U_1,...,U_i)\\
	& =\rho_{i0}+\rho_{i1}U_1+...+\rho_{i(i-1)}U_{i-1}+\rho_{ii}U_{i} 
	\end{split}
	\end{equation}
	
	In equation~\ref{qi}, $h_i$ is $(b_i-a_i)g_i(N_i)+a_i$, where $g_i$ is the distribution of normal variable. Without losing generality, we set $b_i=1,a_i=0$, then we have $h_i=g_i$, which has the absolute error bound with a linear function as Lemma~\ref{L4}.
	Based on equation~\ref{f2a} and Lemma~\ref{L4}, we can get there exist a linear function $f_L(N_1,N_2,...,N_i)=\sum_{k=1}^{i}f_{Lk}(N_k)$ has the bound with and the optimal solution $G=Z^*_i$ when $N_i \in \left[\mu_i-\delta,\mu_i+\delta\right]$ because:
	\begin{equation}
	\begin{split}
	& |f_i(h_1(N_1),h_2(N_2),...,h_i(N_i))-f_L(N_1,N_2,...,N_i)|\\
	& =|\rho_{i0}+\rho_{i1}h_1(N_1)+...+\rho_{i(i-1)}h_{i-1}(N_{i-1})+\rho_{ii}h_i(N_{i})\\
	& -\sum_{k=1}^{i}f_{Lk}(N_k)|\\
	& \le |\sum_{k=1}^{i}(\rho_{ik}h_1(N_k)-f_{Lk}(N_k))|+|\rho_{i0}|\\
	& \le |\rho_{i0}|+\frac{\delta}{\sqrt{2\pi}} \sum_{k=1}^{i}  \rho_{ik}(1-e^{-\frac{\delta ^2}{2\sigma^2_k} })\\
	& =|\rho_{i0}|+\delta (\sum_{k=1}^{i} \frac{\rho_{ik}}{\sqrt{2\pi}}-\sum_{k=1}^{i}\frac{\rho_{ik}}{\sqrt{2\pi}}e^{-\frac{\delta ^2}{2\sigma^2_k} })\\
	& = |\rho_{i0}|+\delta (\sum_{k=1}^{i} \frac{\rho_{ik}}{\sqrt{2\pi}}-\sum_{k=1}^{i}\frac{\rho_{ik}}{\sqrt{2\pi}}(e^{-\frac{1}{2\sigma^2_k} })^{\delta ^2})\\
	& = a_{i}+\delta(b_{i}+\sum_{k=1}^{i}c_{ik}d_k^{\delta^2})
	\end{split}
	\end{equation}
	
\end{proof}

\subsection{Proof of Proposition~\ref{def:general_factor}}
\label{App4}
\begin{proof}

	Now we start from one of rows in $\hat{A} * {diag(N')}$ which is $(a_{i1}N_1,a_{i2}N_2,a_{i3}N_3,0,...,0)$(for simplicity, we denote this row is i-th row. Note that there must exist such a row because $\hat{A}$ is permuted from triangular matrix). In Theorem~\ref{theo_linear}, we proved that $Q'$ can be implemented as a linear function. So $G'_i=Q'_i(B_i)=Q'_i(\{\hat{A} * {diag(N')}\}_i)=Q'_i(a_{i1}N_1,a_{i2}N_2,a_{i3}N_3,0,...,0)$. So:\\
	\begin{equation}
	\begin{split}
	& Q_i(a_{i1}N_1,a_{i2}N_2,a_{i3}N_3,0,...,0) \\
	& =q_{i1}a_{i1}*N_1+q_{i2}a_{i1}*N_2+q_{i3}a_{i1}*N_3 \\
	& =(q_{i1}a_{i1},q_{i2}a_{i2},q_{i3}a_{i3},0...,0)*(N_1,N_2,N_3,...,N_K)^T \\
	& \longrightarrow(c_{i1},c_{i2},c_{i3},0...,0)*(N_1,N_2,N_3,...,N_K)^T\\
	& =\Tilde{A_i}*N'\\
	\end{split}
	\end{equation}
	
	Here $\Tilde{A_i}=(c_{i1},c_{i2},c_{i3},0...,0)$ and  $N'=(N_1,N_2,N_3,...,N_K)^T$. We can adopt this expression method to other rows. Finally, we can make such conclusion: $\Tilde{A}$ is a matrix with the same non-zero position as $\hat{A}$, means that $\Tilde{A}$ is also a matrix permuted from lower triangular matrix.
\end{proof}

\subsection{Formal version of Theorem~\ref{theo:truth} }
\label{App7}
Given $n$ observations $\{X^{(1)},X^{(2)},...,X^{(n)}\}$ sampled from the same distribution $p^*(X)$, along with their corresponding optimal generative factors  $\{Z^{(1)},Z^{(2)},...,Z^{(n)}\}$, 
a function $\Bar{Z^{(n)}}$ with these generative factors will converge to the same ground truth (GT) concept $\ddot{Z}$ as
$lim_{n\to \infty}\Bar{Z^{(n)}}=\ddot{Z}$, where $\Bar{Z^{(n)}}=(Z^{(1)}+Z^{(2)}+...+Z^{(n)})/n$.
\begin{proof}
	The conclusion can be easily deduced from the law of large numbers.
\end{proof}

\subsection{Proof of $(I-\Phi^T)^{-1}$}
\label{Proof3}
Before the proof, we need a lemma:
\begin{lemm}
	\label{L3}
	If $D$ is the adjacency matrix of DAG with nodes vector $Z$, which means that $Z=DZ$. Then there exist a lower triangular matrix $T$ and a vector $Z'$ acquired by finite row exchange of $Z$, making $Z'=TZ'$.
\end{lemm}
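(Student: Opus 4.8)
The statement to prove is Lemma~\ref{L3}: if $D$ is the adjacency matrix of a DAG on node vector $Z$ with $Z = DZ$, then there is a permutation of the coordinates (finite row exchanges) bringing $D$ into lower-triangular form. This is essentially the classical fact that a DAG admits a topological ordering, restated in matrix language.

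\textbf{Proof plan.} The plan is to invoke the existence of a topological ordering of the vertices of a DAG, and then translate that ordering into a permutation matrix acting on $Z$ and $D$. First I would set up notation: let the DAG have vertex set $\{1,\dots,K\}$ with an edge $j \to i$ whenever $D_{ij} \neq 0$ (so the $i$-th row of $D$ records the parents of node $i$). Acyclicity of this directed graph is exactly the DAG hypothesis. Second, I would recall (or prove by a short induction) that every finite DAG has a topological sort: there exists a bijection $\pi:\{1,\dots,K\}\to\{1,\dots,K\}$ such that every edge $j\to i$ satisfies $\pi(j) < \pi(i)$. The standard argument is that a DAG always has a source vertex (a vertex with no incoming edges), because otherwise following incoming edges backwards indefinitely would, by pigeonhole on finitely many vertices, produce a cycle; remove that source, recurse on the remaining DAG, and read off the ordering. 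Each such "remove a vertex" step corresponds to a row/column swap, so the reordering is achieved by finitely many row exchanges, matching the lemma's phrasing.

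\textbf{From the ordering to the matrix form.} Third, I would let $P$ be the permutation matrix corresponding to $\pi$ and set $Z' = PZ$ (the coordinates of $Z$ listed in topological order) and $T = PDP^{-1} = PDP^{T}$. Then from $Z = DZ$ we get $Z' = PZ = PDZ = PDP^{T}PZ = TZ'$, which is the claimed identity $Z' = TZ'$. It remains to check $T$ is lower triangular: $T_{ab} = D_{\pi^{-1}(a)\,\pi^{-1}(b)}$, which is nonzero only if there is an edge $\pi^{-1}(b)\to\pi^{-1}(a)$ in the DAG, and by the topological property this forces $b = \pi(\pi^{-1}(b)) < \pi(\pi^{-1}(a)) = a$, i.e. $b < a$, so $T$ has zeros on and above the diagonal (in particular it is strictly lower triangular, which is even stronger than required). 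I should note that $Z' = PZ$ is indeed obtained from $Z$ by finite row exchanges, since every permutation factors as a product of transpositions.

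\textbf{Main obstacle.} There is no deep obstacle here; the only thing requiring care is the existence of a topological ordering, i.e. the claim that a finite DAG has a source vertex. The cleanest way is the pigeonhole/no-cycle argument sketched above, or equivalently an induction on $K$: base case $K=1$ is trivial, and for the inductive step one extracts a source, deletes it, applies the hypothesis to the size-$(K-1)$ subgraph (which is still a DAG), and prepends the source to the ordering. I would present that induction explicitly and keep the matrix bookkeeping (how a vertex deletion corresponds to a row swap, and how composing the swaps yields $P$) brief, since it is routine linear algebra.
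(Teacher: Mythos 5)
Your proposal is correct and follows essentially the same route as the paper's own proof: both extract a zero-in-degree (source) vertex repeatedly to obtain a topological ordering under which the adjacency matrix becomes lower triangular. Your version is merely more explicit about the matrix bookkeeping, writing the reordering as $Z'=PZ$ and $T=PDP^{T}$ and verifying triangularity entrywise, which tightens but does not change the paper's argument.
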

\begin{proof}
	In a DAG, there must exist at least one node with 0 in-degree. We can remove arbitrary one node with 0 in-degree, and make this node as the first node in $Z'$. 
	
	Because the graph without this node is also a DAG, so we can also find at least one node with 0 in-degree, and make the second node in $Z'$. Because the second node's in-degree is 0 in graph without the first node, so the first line of $T$ has at most 1 non-zero element. 
	
	Repeat this process and we can find such $T$ and $Z'$.   
\end{proof}

To prove the $(I-\Phi^T)^{-1}$ can be acquired by a lower triangular matrix with finite row exchange, we need to prove: \\
There exist a lower triangular matrix $L$ and a elementary matrix $P_{K \times K}$ acquired by unit matrix with finite row exchange, making that $G^T=(I-\Phi^T)^{-1}\epsilon^T=PL\epsilon^T$, $\Phi$ is DAG adjacency matrix with nodes vector $G^T$.\\
Now we have such proof:
\begin{proof}
	With Lemma~\ref{L3}, we can induce that there exist a lower triangular matrix $T$ and a vector $G_R^T$ acquired by finite row exchange of $G^T$, making $G_R^T=TG_R^T+\epsilon \longrightarrow G_R^T=(I-T)^{-1}\epsilon^T$.\\
	Because $G_R^T$ is acquired by finite row exchange of $G^T$, we can denote $G_R^T=P'G^T$, $P'$ is acquired by unit matrix with finite row exchange. So we have $P'G^T=(I-T)^{-1}\epsilon^T \longrightarrow G^T=P'^{-1}(I-T)^{-1}\epsilon^T$. Let $P=P'^{-1}$, $L=(I-T)^{-1}$, then we have $G^T=PL\epsilon^T$.
\end{proof}

\subsection{Algorithm of \ourmeth and Causal-Meta-Graph for Few Shot Link Prediction}
\label{App6}
\begin{algorithm}[!h] 
	\caption{Concept-free Causal-Meta-Graph for Few Shot Link Prediction}
	\KwResult{GNN global parameters $\phi$, Graph signature function $\psi$, Global causal layer parameters $C$}
	Initialize learning rates: $\alpha$, $\beta$, $\gamma$; 
	
	Sample a mini-batch of graphs, $G_{batch}$ from $p(G)$\\
	\For{each $G \in G_{batch}$ }
	{
		$\varepsilon = \varepsilon^{train} \cup \varepsilon^{val} \cup \varepsilon^{test}$// Split edges into train, val, and test;\;
		
		$s_G = \psi (G, \varepsilon^{train})$// Compute graph signature\;
		
		Initialize: $\phi (0)$ ← $\phi$// Initialize local parameters via global parameters\\
		
		\For{k in $[1:K]$}{
			$s_G = stopgrad(s_G)$ // Stop Gradients to Graph Signature\;
			
			$Z = (I-C^T)^{-1}s_G$// Compote hidden representation\;
			
			$L_{train}=-E L B O_{train}+\alpha H(C)$\;
			
			Update $\phi (k)$ ←$ \phi(k-1) - \beta \triangledown \phi L_{train}$
		}
		
		Initialize: $\phi$ ← $\phi_{K}$\;
		
		$s_G = \psi (G, \varepsilon_{val} \cup \varepsilon_{train}) $// Compute graph signature with validation edges\;
		
		$L_{val}=-E L B O_{val}+\alpha H(C)$\;
		
		Update $\phi$ ← $\phi - \gamma \triangledown \phi L_{val}$\;
		
		Update $\psi$ ← $\psi - \gamma \triangledown \psi L_{val}$\;
		
		Update $C$ ← $C - \gamma \triangledown C L_{val}$
	}
\end{algorithm}

\begin{algorithm}[!h] 
	\caption{\ourmeth}
	\KwIn{Graph edges $\mathcal{E}$, node features $X$, $\alpha$, $\beta$}
	Initialize GCN parameters $GCN_\mu$, $GCN_\sigma$, causal matrix $\Phi$;
	
	$\mathcal{E} = \mathcal{E}^{train} \cup \mathcal{E}^{val} \cup \mathcal{E}^{test}$// Split edges into train, val, and test;\\
	$A = A_{train}+A_{val}+A_{test}$//Generate train, valid, test adjacency\\
	
	\For{epoch in $[1: $ number of epoch$]$}{
		$\mu = GCN_\mu(A_{train},X)$// Compute mean of $\varepsilon$\;
		
		$\sigma = GCN_{\sigma}(A_{train},X)$// Compute variance of $\varepsilon$\;
		
		$\varepsilon = N(\mu,\sigma)$//Generate $\varepsilon$ as independent normal distribution\;
		
		$G = (I-C^T)^{-1}\varepsilon$// Compute generate factors\;
		
		${\hat{A}_{train}}=\sigma_1\left({G}_{i}^{T} \hat{G}_{j}\right) $//Reconstruct adjacency matrix\;
		
		$\hat{X}= \sigma_2{(G})$//Reconstruct node features\;
		
		$\mathcal{L}_{train}=-\mathcal{L}_G+\alpha \mathcal{L}_{\Phi}+\beta \mathcal{L}_{MSE}$\;
		
		Update $GCN_\mu$, $GCN_\sigma$, $\Phi$
	}
	Compute $ROC(A_{test}, \hat{A}_{test})$ and $AP(A_{test}, \hat{A}_{test})$
	
\end{algorithm}

\subsection{Ablation study result}
\begin{table}[!th]
	\label{ablation_MSE}
	\caption{
		AUC ($\%$) and AP ($\%$)  scores for \ourmeth w/o MSE and \ourmeth on real-world datasets. }
	\scalebox{0.9}{
		\begin{tabular}{l|cccc}
			\toprule
			
			\multirow{2}{*}{} & \multicolumn{2}{c}{\ourmeth w/o MSE}               & \multicolumn{2}{c}{CCVGAE}                \\
			& AUC                 & AP                  & AUC                 & AP                  \\ \midrule
			Cora              & 0.80${_{\pm 0.02}}$  & 0.82${_{\pm 0.03}}$ & {\bf0.85}${_{\pm 0.03}}$ & {\bf0.85}${_{\pm 0.05}}$ \\
			Corn              & 0.73${_{\pm 0.03}}$ & {\bf0.79}${_{\pm 0.06}}$ & {\bf0.74}${_{\pm 0.06}}$ & 0.78${_{\pm 0.04}}$ \\
			Texas             & {\bf0.76}${_{\pm 0.04}}$ & 0.78${_{\pm 0.03}}$ & 0.75${_{\pm 0.07}}$ & {\bf0.80}${_{\pm 0.07}}$ \\
			Wisconsin         & 0.72${_{\pm 0.07}}$ & 0.77${_{\pm 0.06}}$ & {\bf0.75}${_{\pm 0.04}}$ & {\bf0.79}${_{\pm 0.05}}$ \\
			dRisk             & 0.71${_{\pm 0.05}}$ & 0.65${_{\pm 0.07}}$ & {\bf0.75}${_{\pm 0.06}}$ & {\bf0.72}${_{\pm 0.05}}$ \\
			Actor             & {\bf0.79}${_{\pm 0.04}}$ & {\bf0.81}${_{\pm 0.03}}$ & 0.78${_{\pm 0.07}}$ & {\bf0.81}${_{\pm 0.06}}$ \\ 
			\bottomrule
		\end{tabular}
	}
\end{table}

	
	
	

\end{document}